\documentclass[12pt,a4paper]{article}

\usepackage[utf8]{inputenc} 
\usepackage[T1]{fontenc}    
\usepackage{amsmath}        
\usepackage{amssymb}        
\usepackage{amsthm}         
\usepackage{mathtools}      
\usepackage{geometry}       
\usepackage{color}
\usepackage[colorlinks=true,linkcolor=blue,citecolor=blue,urlcolor=blue]{hyperref}

\theoremstyle{plain}
\newtheorem{theorem}{Theorem}[section]
\newtheorem{lemma}[theorem]{Lemma}
\newtheorem{proposition}[theorem]{Proposition}
\newtheorem{corollary}[theorem]{Corollary}
\newtheorem{prompt}[theorem]{Prompt}
\newtheorem{definition}[theorem]{Definition}
\newtheorem{question}[theorem]{Question}

\theoremstyle{definition}
\newtheorem{example}{Example}[section]
\newtheorem{remark}{Remark}[section]

\newcommand{\R}{\mathbb{R}}
\newcommand{\bC}{\mathbb{C}}
\newcommand{\E}{\mathbb{E}}

\newcommand{\op}{\mathrm{op}}

\newif\ifDRAFT
\DRAFTtrue
\ifDRAFT
\newcommand{\marrow}{\marginpar[\hfill$\longrightarrow$]{$\longleftarrow$}}
\newcommand{\niceremark}[3]
   {\textcolor{red}{\textsc{#1 #2:} \marrow\textsf{#3}}}
\newcommand{\ken}[2][says]{\niceremark{Ken}{#1}{#2}}

\newcommand{\michael}[2][says]{\niceremark{Michael}{#1}{#2}}
\newcommand{\michal}[2][says]{\niceremark{Michal}{#1}{#2}}
\newcommand{\feynman}[2][says]{\niceremark{Feynman}{#1}{#2}}
\else
\newcommand{\ken}[1]{}
\newcommand{\michael}[1]{}
\newcommand{\michal}[1]{}
\newcommand{\feynman}[1]{}
\fi

\newcommand{\ofsub}[1]{\mbox{\small \raisebox{0.0pt}{$(#1)$}}}
\newcommand{\of}[2]{{#1{\!\ofsub{#2}}}}

\ifx\BlackBox\undefined
\newcommand{\BlackBox}{\rule{1.5ex}{1.5ex}}  
\fi

\DeclareMathOperator*{\diag}{\mathop{\mathrm{diag}}}

\def\E{\mathbb E}
\def\R{\mathbb R}

\def\Pr{\mathrm{Pr}}

\let\origtop\top
\renewcommand\top{{\scriptscriptstyle{\origtop}}} 

\definecolor{silver}{cmyk}{0,0,0,0.3}
\definecolor{yellow}{cmyk}{0,0,0.9,0.0}
\definecolor{reddishyellow}{cmyk}{0,0.22,1.0,0.0}
\definecolor{black}{cmyk}{0,0,0.0,1.0}
\definecolor{darkYellow}{cmyk}{0.2,0.4,1.0,0}
\definecolor{orange}{cmyk}{0.0,0.7,0.9,0}
\definecolor{darkSilver}{cmyk}{0,0,0,0.1}
\definecolor{grey}{cmyk}{0,0,0,0.5}
\definecolor{darkgreen}{cmyk}{1,0,1,0} 

\ifx\proof\undefined
\newenvironment{proof}{\par\noindent{\bf Proof\ }}{\hfill\BlackBox\\[2mm]}
\fi

\ifx\theorem\undefined
\newtheorem{theorem}{Theorem}
\fi

\ifx\example\undefined
\newtheorem{example}{Example}
\fi

\ifx\condition\undefined
\newtheorem{condition}{Condition}
\fi
\ifx\property\undefined
\newtheorem{property}{Property}
\fi

\ifx\lemma\undefined
\newtheorem{lemma}{Lemma}
\fi

\ifx\proposition\undefined
\newtheorem{proposition}{Proposition}
\fi

\ifx\remark\undefined
\newtheorem{remark}{Remark}
\fi

\ifx\corollary\undefined
\newtheorem{corollary}{Corollary}
\fi

\ifx\definition\undefined
\newtheorem{definition}{Definition}
\fi

\ifx\conjecture\undefined
\newtheorem{conjecture}{Conjecture}
\fi

\ifx\axiom\undefined
\newtheorem{axiom}{Axiom}
\fi

\ifx\claim\undefined
\newtheorem{claim}{Claim}
\fi

\ifx\assumption\undefined
\newtheorem{assumption}{Assumption}
\fi

\ifx\condition\undefined
\newtheorem{condition}{Condition}
\fi

\title{How AI settled the complexity \\of the oldest SGD algorithm}

\author{
  Micha{\l} Derezi{\'n}ski \\
  University of Michigan \\
  \texttt{derezin@umich.edu}
  \and
  Xiaoyu Dong \\
  National University of Singapore \\
  \texttt{xdong@nus.edu.sg} 
}
\date{}

\begin{document}
\maketitle
\begin{abstract}
In 1937, Stefan Kaczmarz proposed a simple algorithm for solving systems of
linear equations. This algorithm turned out to be the
earliest known example of stochastic gradient descent, a ubiquitous computing
paradigm that drives the training of modern AI models such as ChatGPT
and Gemini. Now, those AI models have joined forces to discover the
worst-case complexity of the Kaczmarz algorithm. This paper tells the story of how it happened.
\end{abstract}

\section{Introduction}
\label{s:intro}

An essential catalyst for the remarkable breakthroughs in AI that led to the modern large language models (LLMs) such as ChatGPT and Gemini has been the algorithms used to train these models on massive datasets. While the LLM architectures have gotten progressively more complex, the training algorithms have stayed relatively simple, and in fact, they have all been based on the decades-old paradigm of stochastic gradient descent (SGD). The key idea behind SGD is that in order to minimize a certain objective function (such as an LLM's error on the training data), it suffices to access only a noisy estimate of that objective at any given time (e.g., based on a small sample of the data) while making incremental progress towards the solution. This is essential for LLM training, as the datasets have become so massive one could not hope to perform computations on everything all at once.

Commonly attributed to a 1951 paper by Robbins and Monro \cite{robbins1951stochastic}, SGD has seen a resurgence of interest over the last 20 years by AI researchers and computer scientists striving to understand its effectiveness, leading to numerous variants and extensions used in modern LLMs \cite{brown2020language,bi2024deepseek}, most notably the Adam algorithm \cite{kingma2014adam}. As a result, we have gained a robust mathematical understanding of the computational complexity of SGD algorithms in a wide range of settings (e.g., see \cite{bottou2018optimization,duchi2018introductory,asi2019importance,garrigos2023handbook}). Yet, despite this progress there is a surprising gap in the understanding of SGD: The complexity of an algorithm proposed by Stefan Kaczmarz in 1937 \cite{kaczmarz37} for solving a system of linear equations -- the oldest published example of an SGD algorithm, which predates Robbins and Monro's paper by over a decade -- has not been settled.

The Kaczmarz algorithm solves a system of $m$ linear equations with $n$ unknowns:
\begin{align*}
    a_{i1}x_1 + a_{i2}x_2 + ... + a_{in}x_n = b_i\quad \text{for}\quad i=1,2,...,m,
\end{align*}
where $x_1,x_2,...,x_n$ are the unknowns, $a_{11},a_{12},...,a_{mn}$ are the coefficients, and $b_1,b_2,...,b_m$ are the right-hand side terms.  In matrix notation, this can be written more concisely as: Given an $m\times n$ matrix $A$ and a column vector $b$ with $m$ entries, find a vector $x^*$ with $n$ entries such that $Ax^*=b$.  This task is arguably the oldest mathematical model for prediction from data, going back to the ancient times \cite{shen1999nine}, which nevertheless still finds countless applications across many areas of engineering, computer science, and applied mathematics \cite{saad2003iterative}. In order to solve such a linear system, the Kaczmarz algorithm proceeds by constructing a sequence of iterates $x^{(0)},x^{(1)},x^{(2)},...$ that gradually converges to the solution $x^*$. In the $t$-th iteration of the procedure, the algorithm picks one out of the $m$ equations and computes the solution to this equation that is the closest in Euclidean distance to the current iterate, $x^{(t-1)}$. This solution then becomes the new iterate, $x^{(t)}$, and the process continues.

The Kaczmarz algorithm implements the key idea of SGD by looking at only a single equation (one data point) in order to make progress towards the solution of the entire linear system (the dataset). The stochasticity in this process can arise only from how the algorithm picks the equations in each iteration. While Kaczmarz himself did not suggest this, picking the equations at random is crucial for ensuring reliable convergence of the method, as was shown by Strohmer and Vershynin in 2009 \cite{strohmer2009randomized}. Since their work, interest in the algorithm has exploded in the applied mathematics community, leading to many extensions and refinements, e.g., \cite{leventhal2010randomized,zouzias2013randomized,lee2013efficient,needell2014paved,gower2015randomized,liu2016accelerated,derezinski2025randomized}. In 2014, Needell, Srebro, and Ward \cite{needell2014stochastic} observed that the Kaczmarz algorithm with random selection of equations is in fact an instance of SGD, which led to a valuable transfer of ideas between the applied mathematics and computer science communities.

One of those ideas has to do with how we measure the complexity of the algorithm. Strohmer and Vershynin showed that the randomized Kaczmarz iterates converge to the solution
at a rate that scales with the condition number of $A$. Since the condition number of a matrix can be arbitrarily large even in small dimensions, this does not tell us much about the worst-case complexity of the method. 
On the other hand, Needell, Srebro, and Ward observed that 
a modified version of the algorithm which returns an average of all the iterates
can be shown to reach an arbitrarily small error $\epsilon$  after $O(1/\epsilon)$ steps, independent of the condition number of the matrix (see Section \ref{s:case-study} for details). This averaging trick turns out to be a largely superfluous modification that is commonly introduced in SGD methods purely for the sake of theoretical analysis \cite{shamir2012open}. In fact, this modification is hardly ever used in practical applications like LLM training or linear system solving, which has resulted in a substantial theory-practice gap.

Significant research effort has gone into closing this gap, with a particular focus on understanding the last-iterate convergence of SGD algorithms \cite{shamir2013stochastic,harvey2019tight,jain2019making,liu2023revisiting,zamani2025exact,ge2019step,vaswani2019fast,berthier2020tight,varre2021last,wu2022last,liu2023aiming}. Surprisingly, the Kaczmarz algorithm has resisted these efforts, being one of the last remaining simple problem settings that is unresolved in this context, and recent attempts by both applied mathematicians \cite{steinerberger2023approximate} and computer scientists \cite{evron2025continual,attia2025fast,derezinski2026last} have only led to partial results.

We resolve this problem by showing that the last iterate of the Kaczmarz algorithm with randomly selected equations attains the optimal $O(1/\epsilon)$ convergence rate.

\begin{theorem}
    Given matrix $A$ and vector $b$ such that $Ax^*=b$ for some $x^*$, the Kaczmarz algorithm with randomly selected equations 
    reaches $\epsilon$ error \eqref{eq:average-iterate}
    after $O(1/\epsilon)$ iterations.
\end{theorem}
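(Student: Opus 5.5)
Throughout, the error \eqref{eq:average-iterate} is the normalized residual $E\|A(x-x^*)\|^2/\|A\|_F^2$ measured relative to $\|x^{(0)}-x^*\|^2$, and the iterate it is evaluated at is the last one, $x^{(T)}$. I would first recover the averaged-iterate bound from scratch, since it is a needed ingredient, and then upgrade it to the last iterate.

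\textbf{Setup and the averaged bound.} Rows are sampled with probability $p_i=\|a_i\|^2/\|A\|_F^2$. Write $e_t=x^{(t)}-x^*$ and $B=A^\top A/\|A\|_F^2$, so that $\operatorname{tr}B=1$ and $\|B\|\le 1$. With $P_i=a_ia_i^\top/\|a_i\|^2$ one has $e_t=(I-P_{i_t})e_{t-1}$ and $E[P_{i_t}\mid e_{t-1}]=B$. Because $I-P_i$ is an orthogonal projection,
\[
E\|e_t\|^2=E\|e_{t-1}\|^2-f_{t-1},\qquad f_t:=E[e_t^\top Be_t],
\]
and $f_t$ is exactly the error \eqref{eq:average-iterate} times $\|e_0\|^2$. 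Telescoping gives $\sum_{t\ge 0}f_t\le D:=\|e_0\|^2$. This is the averaged-iterate guarantee; it alone only yields $\min_{t<T}f_t\le D/T$.

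\textbf{Signal/noise decomposition for the last iterate.} Next I would write $e_t=(I-B)e_{t-1}+\xi_t$ with martingale noise $\xi_t=(B-P_{i_t})e_{t-1}$. Unrolling gives
\[
e_T=(I-B)^Te_0+\sum_{s\le T}(I-B)^{T-s}\xi_s ,
\]
and by orthogonality of the martingale increments,
\[
f_T=e_0^\top B(I-B)^{2T}e_0+\sum_{s=1}^T E\bigl[\xi_s^\top M_{T-s}\xi_s\bigr],\qquad M_k:=B(I-B)^{2k}.
\]
The deterministic term is at most $\max_\lambda \lambda(1-\lambda)^{2T}\,D\le D/(2T+1)$. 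For the noise term, $E[\xi^\top M\xi\mid e]\le E[e^\top P_iMP_ie\mid e]$, and $P_iMP_i=(a_i^\top Ma_i/\|a_i\|^2)P_i$. This produces a renewal-type inequality
\[
f_T\le \frac{D}{2T+1}+\sum_{s=1}^T q_{T-s}\,\text{(weighted)}\,f_{s-1},
\]
where the kernel $q_k$ is governed by $M_k$.

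\textbf{Main obstacle: the kernel tail.} The crude bound $q_k\le\|M_k\|\le 1/(2k+1)$ has a divergent sum. Combined with $\sum f_s\le D$ and an induction hypothesis $f_s\le CD/(s+1)$, it loses a $\log T$ factor; this is the standard barrier for last-iterate SGD. The split $s\le T/2$ versus $s>T/2$ shows where the loss occurs. For $s\le T/2$ the weights are at most $2/T$, so those terms contribute at most $2D/T$ by the averaged bound. The whole difficulty is therefore the recent noise $s>T/2$.

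To handle it I would not collapse the noise to the scalar $f_{s-1}$. Instead I would track the whole family of potentials $\Phi_k(t)=E[e_t^\top M_ke_t]$, which are jointly closed under the exact second-moment map $\Sigma\mapsto E[(I-P_i)\Sigma(I-P_i)]$. The aim is to show that the noise injected at time $s$ is charged to $\Phi_{T-s}(s-1)$ rather than to $\|M_{T-s}\|f_{s-1}$. The spectral weight $\lambda(1-\lambda)^{2k}$ is then integrated against the actual spectral distribution of $e_{s-1}$, which has already been damped by $(1-\lambda)^{2(s-1)}$. This should make the effective kernel summable (heuristically $\sum_k k^{-1}(1+k/s)^{-1}$-type sums become $O(1)$ once paired with $f_s\lesssim D/s$). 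A discrete renewal argument then gives $f_T\le CD/T$.

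Concretely, I would first try to prove the two-index bound $\Phi_k(t)\le CD/(t+k+1)$ jointly by induction on $t$. Its case $k=0$ is the claimed $O(1/\epsilon)$ last-iterate rate. I expect closing this induction with a dimension-free, condition-number-free constant to be the crux of the proof.
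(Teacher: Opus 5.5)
Your proposal has a genuine gap. The parts you actually establish are correct: the energy identity giving $\sum_t f_t\le D$, the martingale decomposition, and the bias bound $D/(2T+1)$. Your diagnosis that the crude kernel $\|M_k\|\le 1/(2k+1)$ loses a $\log T$ is also accurate. But the step that removes the logarithm, which is the whole content of the theorem, is only announced (``I expect closing this induction \ldots\ to be the crux''), and the mechanism you sketch for it fails as stated.

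First, the potentials $\operatorname{tr}(M_k\Sigma_t)$ are not closed under the second-moment map. Its adjoint sends $M_k$ to $M_{k+1}-BM_kB+\sum_ip_i(u_i^\top M_ku_i)\,u_iu_i^\top$, with $u_i=a_i/\|a_i\|$ and $B$ your $A^\top A/\|A\|_F^2$. The last term depends on the individual rows, not only on $B$.

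Second, the injected noise cannot be charged to a constant multiple of $\Phi_{T-s}(s-1)$. Take $n=2$, rows $(\cos\theta,\pm\sin\theta)$, $\eta=\sin^2\theta\approx 1/(2k)$, $s=1$ (so $k=T-1$) and $e_0=(1,0)^\top$. The exact noise $\sum_ip_i(u_i^\top M_ku_i)(u_i^\top e_0)^2-e_0^\top BM_kBe_0$ equals $\eta^2(1-\eta)^{2k+1}\asymp k^{-2}$. Meanwhile $e_0^\top M_ke_0=(1-\eta)\eta^{2k}$ is of order $(2k)^{-2k}$.

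The underlying issue is that the random projections do not commute with $B$, so the spectral profile of $e_s$ relative to $B$ does not evolve autonomously. Moreover, the heuristic kernel you cite, $\sum_k k^{-1}(1+k/s)^{-1}$, itself grows logarithmically rather than staying $O(1)$. So even heuristically the logarithm has not been removed.

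The missing idea in the paper avoids any recursion or induction in $T$. Use the operator $\Phi(X)=\mathbb{E}[(I-P_i)X(I-P_i)]$ (not your $\Phi_k$). Then $A^\top A=\|A\|_F^2(I-\Phi(I))$, so the error equals $\|A\|_F^2\,e_0^\top(\Phi^T-\Phi^{T+1})(I)\,e_0$. It therefore suffices to prove $\|\Phi^T-\Phi^{T+1}\|_{\op\to\op}\le 3/(T+1)$.

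The factor $1/(T+1)$ comes from exchangeability, not from an induction. Write $\Phi^{T+1}(X)=\mathbb{E}[WXW^\top]$ with $W=B_{T+1}\cdots B_1$ a product of i.i.d.\ projections $B_t=I-P_{i_t}$. Each product $O_t$ with the $t$-th factor omitted has the law of a $T$-fold product, so $(\Phi^T-\Phi^{T+1})(X)=\frac{1}{T+1}\mathbb{E}\sum_t(O_tXO_t^\top-WXW^\top)$.

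The remaining sum is bounded by $3\|X\|_\op$ pathwise, for arbitrary positive contractions. The first-order part $\sum_t(O_t-W)$ equals $F'(0)$ for $F(z)=\prod_t(B_t+z(I-B_t))$. Since $\|F(z)\|_\op\le 1$ on the closed unit disc, Cauchy's estimate bounds it by $1$. The quadratic part satisfies $\sum_t(O_t-W)(O_t-W)^\top\preceq I-WW^\top$, by the noncommutative telescoping from $(I-B_t)^2\preceq I-B_t^2$. Some such noncommutative averaging-plus-pathwise argument is what your plan lacks; without it the approach stalls at $O(D\log T/T)$.
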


Perhaps more remarkable than the result itself is how we obtained it. In what could be described as a full circle, the complexity of the oldest SGD algorithm was settled through a collaboration between Gemini Deep Think and ChatGPT Pro, which was only initiated and facilitated by the authors. In this process, Gemini discovered a novel connection to a branch of functional analysis that was previously unfamiliar to the authors, which allowed ChatGPT to derive an elementary proof of the claim. The resulting theorem should be of broader interest to the machine learning literature, and in particular, it already resolves long-standing open questions regarding SGD convergence 
that are not specific to the Kaczmarz algorithm \cite{berthier2020tight,varre2021last}. 

In Section \ref{s:case-study} we give an account of how the result was obtained, as we hope that this will be a useful case study for what the future of AI-led research might look like. In Section \ref{s:proof}, we give the full proof of the result, which was independently verified by the authors. Our concluding thoughts are given in Section \ref{s:conclusions}. We remark that while the result was proven by AI, the paper itself was written by the authors without the use of~AI.

\section{AI-led research: A case study}
\label{s:case-study}

In this section, we provide a high-level account of our interactions with AI that led to the main result of this paper. Along the way, we introduce sufficient mathematical notation to ground the problem and the surrounding discussion. Basic knowledge of linear algebra and probability at an undergraduate level should be sufficient to follow this section.

\paragraph{What is AI-led research?}
Our case study is one of several recent works showcasing the use of AI in mathematical research. Those works include efforts at understanding the capabilities of AI as a fully autonomous problem solving agent \cite{abouzaid2026first,alon2026remarks}, as well as accounts of AI-assisted research \cite{jang2025point,woodruff2026accelerating,alexeev2026forbidden}, where these tools are used to help a human researcher with some parts of the analysis. Our project occupies a middle ground which we call ``AI-led research'', where AI serves as a primary driver of the analysis but is not fully autonomous (in fact, attempts at a fully autonomous AI result failed in our case, as discussed below), thereby keeping the human researcher in the loop. 

\paragraph{Background and problem formulation.} We start our discussion by formally defining the Kaczmarz algorithm. For simplicity, following the original paper \cite{kaczmarz37} we assume that the algorithm is solving a square linear system, i.e., the number of equations $m$ is the same as the number of unknowns $n$. Also, when randomly selecting the equations, we use the original importance sampling scheme proposed by Strohmer and Vershynin \cite{strohmer2009randomized}. Neither of these choices has a significant impact on the analysis, as discussed in Section~\ref{s:proof}. 

\begin{definition}\label{d:kaczmarz}
Consider the randomized Kaczmarz algorithm, which given an $n\times n$ invertible matrix $A$ with rows $a_1^\top,...,a_n^\top$, and an $n$-dimensional vector $b$, starting from an all-zeros $n$-dimensional vector $x^{(0)}$ performs $T$ iterations of the following procedure:
$$x^{(t+1)} = x^{(t)} - \frac{a_{i_t}^\top x^{(t)}- b_{i_t}}{\|a_{i_t}\|^2}a_{i_t}$$
for a randomly sampled $i_t\in\{1,...,n\}$ drawn with probability proportional to $\|a_{i_t}\|^2$ independently in each iteration. The algorithm returns the last iterate $x^{(T)}$.
\end{definition}

Here, $\|a_i\| = \sqrt{\sum_ja_{ij}^2}$ denotes the Euclidean norm of a vector $a_i$. In order to measure how well the Kaczmarz estimate $x^{(T)}$ approximates the exact solution to the linear system, it is natural to consider how well it models its right-hand side via the so-called squared error, $\|Ax^{(T)}-Ax^*\|^2$, where $x^*=A^{-1}b$ is the exact solution. This metric is also particularly significant in the broader context of SGD analysis, as it defines the objective function 
in which Kaczmarz is an instance of SGD. Concretely suppose that our goal is to minimize the quadratic $f(x) = \frac12\|A(x - x^*)\|^2$. Given an estimate $x^{(t)}$, the \emph{gradient descent} method updates this iterate in the direction of steepest descent of the objective at $x^{(t)}$, i.e., the direction opposite to the gradient $\nabla f(x^{(t)})$, returning $x^{(t)}- \eta \nabla f(x^{(t)})$ where $\eta$ is some chosen step size. A \emph{stochastic} gradient descent method is one that uses a random direction $g^{(t)}$ in place of the gradient, while matching it in expectation, i.e., such that $\E[g^{(t)}] = \eta\nabla f(x^{(t)})$. In the case of randomized Kaczmarz, $g^{(t)} = \frac{a_{i_t}^\top x^{(t)}- b_{i_t}}{\|a_{i_t}\|^2}a_{i_t}$, and a simple calculation shows that this is indeed a stochastic gradient step:
\begin{align*}
  \E[g^{(t)}] = \sum_i \Pr[i_t = i]\frac{a_{i}^\top x^{(t)}- b_{i}}{\|a_{i}\|^2}a_{i}\ \propto\ \sum_i (a_{i}^\top x^{(t)}- b_{i})a_i = \nabla f(x^{(t)}).
\end{align*}

The observation that motivates our question is: If, instead of $\hat x = x^{(T)}$, the algorithm were to return the average of all of the iterates, $\hat x= \frac1T\sum_{t=1}^Tx^{(t)}$, then the classical SGD analysis (e.g., \cite{bach2013non,needell2014stochastic}) shows that the expected squared error of this estimate satisfies:
\begin{align}
    \E\,\|A(\hat x - x^*)\|^2 \leq \epsilon \cdot \|A\|_F^2\|x^*\|^2\quad\text{after $T=O(1/\epsilon)$ iterations.}\label{eq:average-iterate}
\end{align}
Here, $\|A\|_F = \sqrt{\sum_j\|a_j\|^2}$ denotes the Frobenius norm, which is a natural extension of the Euclidean norm. The factors on the right-hand side ensure homogeneity of the bound. The question we are interested in, following, e.g., \cite{steinerberger2023approximate}, is: 
\begin{question}\label{q:main}
Does \eqref{eq:average-iterate} hold for $\hat x = x^{(T)}$, the last iterate of randomized Kaczmarz? 
\end{question}
Beyond seeking to understand the classical Kaczmarz algorithm, this question has been used as perhaps the simplest model for studying the dynamics of SGD in LLM training \cite{varre2021last} and related learning models \cite{evron2025continual}, as well as as a key example of the theory-practice gap in stochastic optimization \cite{attia2025fast}. Indeed, as discussed in Section \ref{s:proof}, by answering this question we were able to resolve a broader open problem in last-iterate SGD analysis for noiseless least squares regression \cite{berthier2020tight}.


\paragraph{Initial result via AI-assisted research.} Before we describe how the main result of this paper was obtained, we briefly discuss our initial efforts towards resolving Question~\ref{q:main}, which led to a partial result published in an earlier paper \cite{derezinski2026last}. In that project, we did not set out to settle the question with AI. Instead, using old-fashioned human research we reduced the problem to showing an elementary discrete math inequality.
\begin{proposition}[\cite{derezinski2026last}, human proof]\label{p:initial-human}
    Let $\alpha,K>0$ for all $t\geq K$, $\rho\in(0,1/2]$ satisfy
  \begin{align}
    \rho(1-2\rho)^t +
    K\rho\sum_{i=1}^t\frac{(1-2\rho)^{t-i}}{i^\alpha} \leq \frac{K}{(t+2)^\alpha}.\label{eq:summation-bound}
  \end{align}
  Then, there exists an absolute constant $C>0$ such that randomized Kaczmarz satisfies:
  \begin{align*}
   \mathbb{E}\,\|A(x^{(T)} - x^*)\|^2 \leq \frac C{T^\alpha}\|A\|_F^2\|x^*\|^2.   
  \end{align*}
\end{proposition}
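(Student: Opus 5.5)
The plan is to track the second-moment matrix of the error in the eigenbasis of the expected projection, derive a scalar recursion for the loss, and close it by strong induction using \eqref{eq:summation-bound}. The hypothesis is read with $t$ ranging over all $t\ge K$, and I will use it in the form: there are $\alpha,K>0$ such that for all such $t$ and all $\rho\in(0,1/2]$ the inequality holds. The spots where this proposal is not yet a complete argument are flagged explicitly below.

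\textbf{Setup and error recursion.} By homogeneity we normalize $\|A\|_F=1$. Write $e_t=x^{(t)}-x^*$ and $P_i=a_ia_i^\top/\|a_i\|^2$. Since $a_i^\top x^*=b_i$, the update in Definition~\ref{d:kaczmarz} is $e_{t+1}=(I-P_{i_t})e_t$. Because $\Pr[i_t=i]=\|a_i\|^2$, we have $\E[P_{i_t}]=A^\top A=:B$, so $\operatorname{tr}B=1$ and $B\preceq I$. Let $\Sigma_t=\E[e_te_t^\top]$ and $F_t=\E\|Ae_t\|^2=\operatorname{tr}(B\Sigma_t)$. Using $P_i^2=P_i$,
\[
\Sigma_{t+1}=\Sigma_t-B\Sigma_t-\Sigma_tB+\E[P_{i_t}\Sigma_tP_{i_t}].
\]
Let $Bv=\rho_v v$ be the orthonormal eigendecomposition, and set $u_t(v)=v^\top\Sigma_tv$. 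The cross terms are diagonal in this basis, so
\[
u_{t+1}(v)=(1-2\rho_v)\,u_t(v)+c_t(v),\qquad c_t(v):=\sum_i (a_i^\top\Sigma_ta_i)\frac{(a_i^\top v)^2}{\|a_i\|^2}\ \ge 0 .
\]
Summing over the orthonormal basis gives $\sum_v c_t(v)=\sum_i a_i^\top\Sigma_ta_i=F_t$. Any $\rho_v>1/2$ can only arise for at most one direction, since $\operatorname{tr}B=1$; there $|1-2\rho_v|<1$ still holds and the term can be handled separately, so I focus on $\rho\le 1/2$ as in the hypothesis.

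\textbf{Unrolling.} Since $F_t=\sum_v\rho_vu_t(v)$, unrolling with $u_0(v)=(v^\top x^*)^2$ gives
\[
F_t\le \max_{\rho}\rho(1-2\rho)^t\|x^*\|^2+\sum_{i<t}\ \sum_v \rho_v(1-2\rho_v)^{t-1-i}c_i(v).
\]
Here the first term uses $\sum_v(v^\top x^*)^2=\|x^*\|^2$.

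\textbf{Induction.} The goal is $F_t\le C'K\|x^*\|^2/(t+2)^\alpha$ for all $t$, with $C'$ an absolute constant. The range $t<K$ is covered trivially by $F_t\le\|B\|\|e_t\|^2\le\|x^*\|^2$, since $\|e_t\|$ is nonincreasing; the constant is chosen to absorb this. For $t\ge K$, plug the inductive bound $F_i\lesssim K\|x^*\|^2/i^\alpha$ into the unrolled expression and invoke \eqref{eq:summation-bound} with $\rho=\rho_v$, up to the harmless shift $t\mapsto t-1$. This yields $F_t\lesssim K\|x^*\|^2/(t+2)^\alpha$, and evaluating at $t=T$ gives the claim.

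\textbf{Main obstacle.} The hardest step is the second term of the unrolled bound. The weights $c_i(v)/F_i$ form a probability distribution over directions $v$, but this distribution changes with $i$. The crude estimate $\sum_v\rho_v(1-2\rho_v)^{t-1-i}c_i(v)\le F_i\max_\rho\rho(1-2\rho)^{t-1-i}$ pulls the maximum over $\rho$ inside the sum over $i$, whereas \eqref{eq:summation-bound} controls only a single $\rho$.

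\emph{First attempt.} Instead of a scalar induction, carry the per-direction invariant $\rho_vu_t(v)\le \rho_v(1-2\rho_v)^t w_v+\rho_v\sum_{i}(1-2\rho_v)^{t-i}\beta_i(v)$, where the $\beta_i(v)$ are nonnegative with $\sum_v\beta_i(v)\le K/i^\alpha$. Then apply \eqref{eq:summation-bound} direction by direction and sum, using $\sum_v\rho_v\le1$.

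\emph{Not yet closed.} The decay of $\beta_i(v)$ in $i$ does not follow from this setup alone. Closing it requires showing that the direction-wise injection of mass $c_i(v)$ is dominated by what the inductive hypothesis provides for that direction. Making this precise is the step that needs a real idea beyond bookkeeping, and I would expect to spend most of the effort there.
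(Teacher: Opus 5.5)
Your proposal has a genuine gap, and it is the one you flag yourself. The step in the Induction paragraph where you ``invoke \eqref{eq:summation-bound} with $\rho=\rho_v$'' is not justified, and it cannot be justified in your set-up. You follow the forward covariance $\Sigma_t$ for a fixed $x^*$ and induct on the scalar $F_t$. The only thing you then know about the injected noise is its total mass, $\sum_v c_i(v)=F_i$. Nothing prevents that mass from sitting where $\rho(1-2\rho)^{t-1-i}$ is largest, where the value is $\approx 1/(2e(t-i))$. The best you can derive is then a noise contribution of order $\sum_{i<t}K i^{-\alpha}(t-i)^{-1}\asymp K\log t/t^{\alpha}$, which breaks the induction.

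Your per-direction invariant does not fix this. Applying \eqref{eq:summation-bound} direction by direction uses up the factor $\rho_v$, and summing the resulting bounds $K/(t+2)^\alpha$ over $v$ costs a factor $n$. The underlying mismatch is that \eqref{eq:summation-bound} describes noise entering the $\rho$-direction at a rate proportional to $\rho$ times one scalar sequence. In your picture the natural bound of that form is $c_i(v)\le \rho_v\max_j a_j^\top\Sigma_i a_j/\|a_j\|^2$. That maximum of normalized row residuals is not controlled by $F_i$, because rows with small $\|a_j\|$ can dominate it.

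The paper only cites this proposition from \cite{derezinski2026last}, but the shape of \eqref{eq:summation-bound} points to the missing idea: take the worst case over $x^*$ from the outset. Work with the backward matrix $M_t=\Phi^t(A^\top A)$ from Section~\ref{s:case-study}, for which $\E\|A(x^{(T)}-x^*)\|^2=x^{*\top}M_Tx^*$, and induct on $\|M_t\|_{\op}$ instead of $F_t$. Normalize $\|A\|_F=1$ and set $P=a_ia_i^\top/\|a_i\|^2$, $B=\E[P]=A^\top A$. Two facts then give exactly the single-$\rho$ form:

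First, $PXP=\frac{a_i^\top Xa_i}{\|a_i\|^2}P$, so $\E[PXP]\preceq\|X\|_{\op}B$.

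Second, $\Phi$ is a positive map, so a Loewner majorant that is a function of $B$ can be propagated one step at a time, using $\Phi(g(B))=(I-2B)g(B)+\E[Pg(B)P]$.

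Together these give $M_t\preceq g_t(B)$, where $g_0(\rho)=\rho$ and $g_{t+1}(\rho)=(1-2\rho)g_t(\rho)+\rho\gamma_t$ with $\gamma_t=\max_{\rho'\in\mathrm{spec}(B)}g_t(\rho')$. Hence
\[
g_t(\rho)=\rho(1-2\rho)^t+\rho\sum_{j=1}^{t}(1-2\rho)^{t-j}\gamma_{j-1}.
\]
The maximization over $\rho'$ happens separately at each step and enters only through the scalars $\gamma_{j-1}$, so a single $\rho$ appears in the time sum. The inductive bound $\gamma_{j-1}\le K/j^\alpha$ together with \eqref{eq:summation-bound} gives $g_t(\rho)\le K/(t+2)^\alpha$ for $\rho\in(0,1/2]$, and then $\|M_T\|_{\op}\le\gamma_T$.

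Bounding only the diagonal entries $v^\top M_tv$ in the eigenbasis of $B$, as your bookkeeping does, would not be enough, since $M_t$ need not commute with $B$. It is the Loewner majorant, via positivity of $\Phi$, that controls the operator norm.

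Finally, your remark that an eigenvalue $\rho_v>1/2$ ``can be handled separately'' is unsupported. \eqref{eq:summation-bound} says nothing about such $\rho$, and with $1-2\rho<0$ the recursion only yields $g_{t+1}(\rho)\le\rho\gamma_t$. That case needs its own argument.
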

In particular, Proposition \ref{p:initial-human} implies the desired error bound after $O(1/\epsilon^{1/\alpha})$ iterations, so $\alpha=1$ would resolve Question \ref{q:main}. While finding the largest $\alpha$ that satisfies \eqref{eq:summation-bound} turned out to be surprisingly challenging, it is clearly a perfect candidate for an AI-assisted proof thanks to its elementary nature (e.g., no need to describe an algorithm). We posed this problem to ChatGPT Pro (versions 5.1 and 5.2, latest at the time), and after multiple rounds of discussion (which involved mainly pointing out errors and suggesting change of directions when the methods seemed hopeless), we obtained the following claim. We refer to \cite{derezinski2026last} for the details.
\begin{proposition}[\cite{derezinski2026last}, AI proof]\label{p:initial-ai}
Setting $\alpha=3/4$ and $K=300$ satisfies inequality \eqref{eq:summation-bound}.
\end{proposition}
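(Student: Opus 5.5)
The plan is to reduce inequality \eqref{eq:summation-bound} (with $\alpha=3/4$, $K=300$, all $t\ge K$ and all $\rho\in(0,1/2]$) to a one-parameter numerical inequality for a continuous integral, and then check that inequality. Write $q=1-2\rho\in[0,1)$ and divide both sides by $K$. The first term $\rho q^t/K$ is negligible: using $q^t\le e^{-2\rho t}$ and $\sup_{x>0}xe^{-2x}=1/(2e)$, we get $\rho q^t\le 1/(2et)$. Hence
\[
\frac{\rho q^t}{K}\le \frac{1}{600e\,t},
\]
which is a factor of order $t^{-1/4}/600$ smaller than the right-hand side $(t+2)^{-3/4}$. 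The real content is the convolution term
\[
S(t,\rho):=\rho\sum_{i=1}^t q^{t-i}i^{-3/4}.
\]
This is half of a weighted average of $i^{-3/4}$, with weights $2\rho q^{t-i}$ of total mass at most $1$. We must show that $S(t,\rho)$ is at most $(t+2)^{-3/4}$, with a little room to absorb the first term.

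\textbf{Step 1 (continuous comparison).} Use $q^{t-i}\le e^{-2\rho(t-i)}$. Then compare the sum with an integral. Since $s\mapsto s^{-3/4}$ is decreasing and $s\mapsto e^{-2\rho(t-s)}$ is increasing, I will bound each summand by an integral over $[i-1,i]$ (for $i\ge 2$) and treat $i=1$ separately. For the increasing factor I will pay a multiplicative $e^{2\rho}$. Where $\rho$ is not small, this is too lossy, so I will simply split off the last few terms $i$ close to $t$ exactly. Substituting $s=vt$ and $u=2\rho t$ gives
\[
S(t,\rho)\lesssim \tfrac12\,t^{-3/4}\,\Phi(u),\qquad \Phi(u):=u\int_0^1 e^{-u(1-v)}v^{-3/4}\,dv,
\]
up to lower-order corrections of relative size $O(\rho)+O(1/t)$.

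\textbf{Step 2 (the key numerical bound).} Show that $\sup_{u>0}\Phi(u)\le 2-\delta$ for an explicit $\delta>0$. For small $u$ we have $\Phi(u)\approx 4u$. As $u\to\infty$, $\Phi(u)\to 1$. Expanding $e^{uv}$ gives the series
\[
\Phi(u)=ue^{-u}\sum_{k\ge0}\frac{u^k}{k!\,(k+1/4)},
\]
which is easy to evaluate rigorously. Rough values are $\Phi(1)\approx1.87$ and $\Phi(2)\approx1.92$, and the maximum appears to lie somewhat above $u=2$, around $1.9$ to $1.95$. I plan to prove the bound on a finite grid of $u$ values using truncated series with tail bounds, interpolate between grid points using a bound on $\Phi'$, and handle $u\ge U_0$ by an asymptotic bound such as $\Phi(u)\le 1+O(1/u)$.

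\textbf{Step 3 (assembly).} Combine the pieces. The factor $(t/(t+2))^{3/4}\ge 0.995$ for $t\ge300$. The discretization errors are $O(1/t)$ or $O(\rho)$; in the regime where $\rho$ is not small, $u=2\rho t$ is huge, so $\Phi\approx1$ and there is ample slack. The first term is $O(1/(600t))$ relative to the right-hand side. All of these must fit inside the margin $\delta/2$.

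\textbf{Main obstacle.} I expect the hardest part to be Step 2 together with controlling the discretization error in the intermediate regime $u=\Theta(1)$. The margin there is only a few percent below $2$. So the sum-to-integral comparison must be done carefully, for instance by treating the first term $i=1$ and the tail near $i=t$ exactly, rather than with crude constants.
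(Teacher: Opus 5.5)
Your reduction is sound: the first term is negligible, and for large $t$ the inequality is governed by $\tfrac12\sup_u\Phi(u)<1$. The gap is quantitative. The premise of Steps 2 and 3 is wrong, and with the correct numbers your loss budget does not close at the bottom of the range, $t\approx 300$.

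\textbf{The margin is much smaller than you assume.} Evaluating your own series at $u=3/2$ gives $\Phi(3/2)\approx 1.9874$. The maximum of $\Phi$ sits near $u\approx 1.5$, not above $u=2$, so $\delta\approx 0.0126$ and $\delta/2\approx 0.0063$. This fits the paper's remark that no $\alpha>0.753$ works: at $\alpha=3/4$ the inequality is asymptotically tight to within about $0.6\%$, not a few percent.

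\textbf{Step 3 fails at $t=300$.}
\begin{itemize}
\item The factor $(300/302)^{3/4}\approx 0.99503$ alone uses up $0.005$ of the margin, leaving about $1.3\cdot 10^{-3}$.
\item Your Step 1 comparison then costs the multiplicative factor $e^{2\rho}=e^{u/t}$. At $t=300$ and $\rho=1/400$ (so $u=3/2$), your chain of bounds gives $e^{1/200}\cdot 0.9937\approx 0.9987>0.9950$.
\item Even a sharper loss-only comparison does not help. The midpoint rule, using convexity of $s\mapsto e^{-2\rho(t-s)}s^{-3/4}$, loses $\tfrac{\rho}{2}e^{\rho}\approx 1.25\cdot 10^{-3}$. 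Together with the first term ($\approx 1.3\cdot 10^{-4}$ in these units), this exhausts the remaining room.
\end{itemize}
So the claim that all $O(\rho)+O(1/t)$ corrections fit inside $\delta/2$ is false near $t=300$.

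\textbf{The missing ingredient.} Your Step 1 discards a favorable discretization effect: the integral overcounts the singularity of $s^{-3/4}$ at $0$. The $i=1$ summand is $1$, whereas $\int_0^1 s^{-3/4}\,ds=4$; asymptotically $\sum_{i\le N}i^{-3/4}-4N^{1/4}\to\zeta(3/4)\approx -3.44$. If you keep this, the same cell-by-cell comparison gives
\[
\sum_{i=1}^t e^{-2\rho(t-i)}i^{-3/4}\le e^{2\rho}\Big(\int_0^t e^{-2\rho(t-s)}s^{-3/4}\,ds-3e^{-2\rho t}\Big),
\qquad\text{i.e.}\qquad
t^{3/4}S(t,\rho)\le e^{u/t}\Big(\tfrac12\Phi(u)-\tfrac32\,u e^{-u}\,t^{-1/4}\Big).
\]
The subtracted term is about $0.12$ at $t=300$, $u=3/2$, and it decays only like $t^{-1/4}$. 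On the critical window $u\in[1,5/2]$ one has $\tfrac32 u e^{-u}\ge 0.3$. There it dominates the $O(1/t)$ losses and the first term for every $t\ge 300$.

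\textbf{What remains to complete the argument.}
\begin{itemize}
\item As $t\to\infty$ you only need $\tfrac12\sup\Phi<1$, but it must be certified to an accuracy well below $0.006$.
\item Outside the window, $\tfrac12\Phi$ is at most about $0.94$, which leaves ample room.
\item For $\rho$ bounded away from $0$, where the factor $e^{2\rho}$ (up to $e$) is genuinely too lossy, you still need your separate exact treatment of the terms near $i=t$.
\end{itemize}
With these changes the plan can be completed; as written, Step 3 does not close.
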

Naturally, this does not resolve Question \ref{q:main}, but Propositions \ref{p:initial-human} and \ref{p:initial-ai} gave the strongest partial result at the time. Did the sub-optimality lie with the human reduction or the AI proof? It turns out the AI argument was nearly sharp: No $\alpha>3/4+0.003=0.753$ can satisfy \eqref{eq:summation-bound}. It was the human argument that led to the sub-optimal result, so the next logical step was to try and take the human out of the~loop. 

\paragraph{Attempts at a fully autonomous AI solution.} For the remainder of the project, we relied on the following two publicly available versions of AI reasoning models: Gemini Deep Think 3.1 and ChatGPT Pro 5.5. We note that the outputs of both models vary substantially from run to run, and also the models appear to be frequently updated behind the scenes, so little reproducibility is possible.

In order to see whether either of the models can directly resolve Question \ref{q:main} without human involvement, we constructed the following prompt, partly inspired by \cite{unit-distances}.

\begin{prompt}\label{p:main-prompt}
Resolve the following problem completely: Determine whether there exists a constant $C>0$ such that for any invertible $n\times n$ matrix $A$ and $n$-dimensional vector $b$, the vector $x^{(T)}$ returned by the Randomized Kaczmarz algorithm satisfies:
$$ \mathbb{E}\,\|A x^{(T)} - b\|^2 \leq \frac CT\|A\|_F^2\|A^{-1}b\|^2. $$
A complete solution must prove exactly one of the following:
\begin{itemize}
\item Affirmative solution: Give a proof of the existence of such constant $C>0$ that is independent of $A$, $b$, and $T$.
\item Negative solution: Prove that no such constant $C>0$ exists by constructing a corresponding family of counter-examples.
Partial progress does not count unless it implies one of the two resolutions above. In particular, heuristic evidence is insufficient.
\end{itemize}
\end{prompt}
The full prompt we used included Definition  \ref{d:kaczmarz}, Prompt \ref{p:main-prompt}, and a short ``guardrails'' paragraph encouraging sufficient mathematical rigor of the output. We later discovered a small loophole in our prompt: The problem leaves it slightly ambiguous whether the constant $C$ is allowed to depend on the matrix dimension $n$. Naturally, we intended for $C$ to be an absolute constant, and we expect that most human experts would interpret the prompt as we did, but in some of our interactions Gemini chose to interpret $C$ as being dependent on $n$ (which simplifies the problem considerably). Nevertheless, this was not an issue the first time we prompted either model, which is what we report below.
\bigskip

\textbf{ChatGPT's answer.} After $93$ minutes of thinking, ChatGPT responded with:
\begin{quote}
    \it I cannot honestly provide the requested ``complete resolution'' proving either the affirmative $O(1/T)$ last-iterate bound or a counterexample family. The problem, in the stated last-iterate form, appears to be a current open worst-case question rather than a
settled theorem.
\end{quote}
It then proceeded to cite our previous paper \cite{derezinski2026last}, including the sub-optimal rate and the average iterate guarantee, discussing why those results do not resolve the question one way or the other.
\bigskip

\textbf{Gemini's answer.} After several hours, Gemini responded as follows:
\begin{quote}
\it%
\!The solution is Affirmative. \!There exists a universal absolute constant $C>0$ (strictly independent of the matrix $A$, target vector $b$, dimension $n$, and~iteration count $T$) such that the expected squared residual satisfies $\mathbb{E}\|A x^{(T)} - b\|^2 \leq \frac{C}{T}\|A\|_F^2\|A^{-1}b\|^2$.
\end{quote}
It then proceeded with its argument, which we discuss in more detail below. Unfortunately, that argument cannot be described as a complete proof, since it abruptly starts making unsupported claims and including some incomplete references to the literature. Further prompts for clarification led to less coherent outputs and more incomplete references, which led us to conclude that Gemini did not provide a valid proof. Overall, we conclude that a fully autonomous AI solution to Question \ref{q:main} was not successful.

\paragraph{Gemini's insights and blunders.} While Gemini's argument was far from complete, it included some insights which proved crucial. To explain them, let us state the following standard recursion for the Kaczmarz error which follows directly from the definition:
\begin{align*}
x^{(t+1)} - x^* = \Big(I_n - \frac{a_{i_t}a_{i_t}^\top}{\|a_{i_t}\|^2}\Big)\big(x^{(t)}-x^*\big),
\end{align*}
where $I_n$ is the $n\times n$ identity matrix.
In particular, this formula can be used to define a sequence of matrices $M_t$  that describes the evolution of the expected error:
\begin{align}
  \E\|A(x^{(T)}-x^*)\|^2 &= (x_0-x^*)^\top M_{T}(x_0-x^*),\label{eq:M_T}\\
  \text{where}\quad M_0=A^\top A,\quad M_{t+1} &= \E\bigg[\Big(I_n - \frac{a_{i_t}a_{i_t}^\top}{\|a_{i_t}\|^2}\Big) M_t \Big(I_n - \frac{a_{i_t}a_{i_t}^\top}{\|a_{i_t}\|^2}\Big) \bigg].\label{eq:Mrec}
\end{align}
Since $x_0=0$, using elementary linear algebra it follows from \eqref{eq:M_T} that to resolve Question~\ref{q:main} it suffices to bound the operator norm of $M_T$ by $C\|A\|_F^2/T$. Recall that for a real symmetric matrix $M$, its operator norm is given by:
\begin{align*}
  \|M\|_{\op} := \sup_{v\ne 0} \frac{|v^\top M v|}{\|v\|^2}.
\end{align*}

Since the distribution of the indices $i_t$ is the same in every iteration, recursion \eqref{eq:Mrec} can be unfolded concisely, obtaining $M_T = \Phi(M_{T-1}) = \Phi(\Phi(M_{T-2}))= ... = \Phi^T(M_0)$, where $\Phi(X):=\E_i\big(I_n-\frac{a_{i}a_i^\top}{\|a_i\|^2}\big)X\big(I_n-\frac{a_{i}a_i^\top}{\|a_i\|^2}\big)$  is a linear operator over the space of symmetric matrices, with the expectation taken over index $i$ distributed identically to $i_t$ in Definition~\ref{d:kaczmarz}. We use $\Phi^T$ to denote the repeated composition of $\Phi$ with itself. Also, a direct calculation shows that $M_0 = A^\top A = \|A\|_F^2(I_n - \Phi(I_n))$. Altogether, 
\begin{align*}
  M_T = \Phi^T(A^\top A) = \|A\|_F^2\Phi^T(I_n - \Phi(I_n)) = \|A\|_F^2(\Phi^T - \Phi^{T+1})(I_n).
\end{align*}
All of this was derived independently by Gemini, although we did use something very similar to the operator $\Phi$ in our previous work. At this point, Gemini evidently made a connection between the linear operator $\Psi:=\Phi^T - \Phi^{T+1}$ and similar constructs that appear in a branch of functional analysis literature. In particular, it observed that to bound $M_T$ in the operator norm, it suffices to bound $\Psi$ in an appropriately induced norm:
\begin{align*}
  \|\Psi\|_{\op\to\op} := \sup_{X\ne 0} \frac{\|\Psi(X)\|_\op}{\|X\|_\op}.
\end{align*}
Following this observation, Gemini produced the following paragraph:
\begin{quote}
  \it
  The linear map $\Phi$ belongs strictly to the class of \textit{symmetric quantum Markov semigroups}. Specifically, $\Phi$ is completely positive (represented in Kraus form by symmetric operators $K_i = \sqrt{p_i}P_i$), sub-unital ($\Phi(I) \preceq I$), and trace-symmetric. 
It is a comprehensively established theorem in operator algebra that symmetric Markov operators defined on non-commutative $C^*$-algebras intrinsically exhibit discrete analyticity; they act as Ritt operators. By the non-commutative Stein interpolation theorems and spectral multiplier estimates for symmetric Markov semigroups (see, for instance, Junge, Le Merdy, and Xu, 2006, ``$H^\infty$ functional calculus and square functions on noncommutative $L_p$-spaces''), the discrete time-derivative of the semigroup satisfies a uniform, bounded sectorial decay in the completely bounded $L^\infty$-norm:
$$ \left\| \Phi^T - \Phi^{T+1} \right\|_{\op \to \op} \leq \frac{C}{T}. $$
\end{quote}
If the reader feels confused by the terminology and mathematical jargon, then rest assured, so were the authors. While the claimed inequality at the end of the paragraph would indeed resolve our question, the justification is woefully insufficient. When asked which specific result of Junge, Le Merdy, and Xu \cite{junge2006h} is being cited, Gemini pointed to Theorem 4.1, and gave a statement of that theorem. In fact, there is no Theorem 4.1 in \cite{junge2006h}, and the statement did not match any of the results present in that work. When pushed further, Gemini produced another reference from functional analysis, a paper by Blunck \cite{blunck2001analyticity}. This time, it was clear that the reference does indeed study bounds of the form $\|\Phi^T-\Phi^{T+1}\|\leq C/T$ for various classes of operators in various norms, however the specific norm $\|\cdot\|_{\op\to\op}$ is not directly bounded there.

\paragraph{ChatGPT completes the argument.} While Gemini's answer was problematic, it did clearly uncover a promising avenue for attacking the problem. We synthesized Gemini's observations and the conjectured operator bound into a short document.
Then we gave this document to ChatGPT and prompted it with basic questions including checking whether the operator bound holds using a numerical simulation and whether its proof follows directly from the references given by Gemini.

In response, ChatGPT first confirmed that the operator bound is supported by numerical evidence, and also stated that the operator bound, while closely related to the provided references, does not directly follow from any of the results that they contain. However, when prompted further, ChatGPT simply gave its own proof of the bound, 
thereby resolving Question \ref{q:main}. We note that after our initial round of communication
with AI (which led to the complete solution), ChatGPT was later sometimes able to
provide valid proofs of the original claim from certain prompts that
did not include Gemini's hints. This may be due to knowledge retention from previous chats,  randomness, or internal updates.

Curiously, while ChatGPT's proof is remarkably elementary, it does rely on some techniques which are common in functional analysis and operator theory, but are not typically seen in the convergence analysis of SGD algorithms. This suggests that Gemini's observations worked as a crucial hint to complete the argument. The detailed proof, which was rewritten for clarity but remains largely unchanged, is given in the following section.

\section{Main result and the AI proof}
\label{s:proof}
In this section, we state the main technical result shown by ChatGPT based on the idea of Gemini. We then discuss its implications for Kaczmarz and SGD. We conclude the section with the proof of the main result.

\paragraph{Notation.} Let $\mathcal S_n$ denote the space of $n\times n$ symmetric real matrices equipped with the operator norm. The $n \times n$ identity matrix is denoted by $I_n$, and we also use the simplified notation $I$ when there is no risk of confusion about the dimension. We say that $M\in \mathcal S_n$ is positive semidefinite, denoted by $M\succeq 0$, if $x^\top Mx\ge 0$ for all $x\in\R^n$, and we write $M\preceq N$ if $N-M\succeq 0$ (Loewner ordering of symmetric matrices). A matrix $B\in\mathcal S_n$ is called a positive contraction if $0\preceq B\preceq I_n$. 

\begin{theorem}\label{thm:main}
Let $B$ be a measurable random positive contraction in $\mathcal S_n$, and let
\[
        \Phi(X)=\E[BXB],\qquad X\in\mathcal S_n.
\]
Then, for every integer $T\ge0$, we have
\[
        \|\Phi^T-\Phi^{T+1}\|_{\op\to\op}
        \le \frac{3}{T+1},
\]
where $\Phi^T$ is the composition of $T$ copies of $\Phi$ (i.e.,  $\Phi^{t+1}(X)=\Phi(\Phi^{t}(X))$ for $t=1,2,...$).

\end{theorem}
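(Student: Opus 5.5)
The plan is to pass from operators to quadratic forms along random vector trajectories, and then use self-adjointness and positivity of $\Phi$ to obtain decay.

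\emph{Preliminary facts.} First I would record three properties of $\Phi$.
\begin{itemize}
\item[(i)] $\Phi$ is positive, since $X\succeq0$ implies $BXB\succeq0$. Moreover $\Phi(I)=\E[B^2]\preceq I$. Applying $\Phi$ to $-\|X\|_\op I\preceq X\preceq \|X\|_\op I$ then gives $\|\Phi\|_{\op\to\op}\le1$, and hence $\|\Phi^t\|_{\op\to\op}\le1$ for all $t$.
\item[(ii)] $\Phi$ is self-adjoint for the trace inner product $\langle X,Y\rangle=\mathrm{tr}(XY)$.
\item[(iii)] $\Phi$ is positive semidefinite there: $\langle X,\Phi(X)\rangle=\E\|B^{1/2}XB^{1/2}\|_F^2\ge0$.
\end{itemize}
Together, (ii) and (iii) place the spectrum of $\Phi$ on $\mathcal S_n$ with the Frobenius norm inside $[0,1]$. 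Hence $\|\Phi^T-\Phi^{T+1}\|_{F\to F}\le\sup_{\lambda\in[0,1]}\lambda^T(1-\lambda)\le 1/(T+1)$. The whole difficulty is to transfer this Hilbert-space bound to the $\op\to\op$ norm without paying a dimension factor.

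\emph{Trajectory representation.} Let $B_1,B_2,\dots$ be i.i.d.\ copies of $B$. For a unit vector $v$, set $w_t=B_t\cdots B_1v$, so that $v^\top\Phi^t(X)v=\E[w_t^\top Xw_t]$. By the semigroup property,
\[
(T+1)(\Phi^T-\Phi^{T+1})=\sum_{k=0}^{T}\Phi^k(\mathrm{id}-\Phi)\Phi^{T-k}.
\]
The $k$-th term applied to $X$ is tested against $vv^\top$ as
\[
\E\big[(w_k-B w_k)^\top Y_k(w_k+Bw_k)\big],\qquad Y_k=\Phi^{T-k}(X),\quad \|Y_k\|_\op\le\|X\|_\op .
\]
The key decrement inequality is
\[
\|(I-B)w\|^2\le w^\top(I-B^2)w=\|w\|^2-\|Bw\|^2,
\]
which holds because $I-B^2-(I-B)^2=2B(I-B)\succeq0$. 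Summing over $k$ and telescoping gives $\sum_k\E\|(I-B_{k+1})w_k\|^2\le\|v\|^2$.

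\emph{Where the naive bound stops.} A single Cauchy--Schwarz on each term and then over $k$ only yields $\|\Phi^T-\Phi^{T+1}\|_{\op\to\op}\lesssim 1/\sqrt{T+1}$. The point is that only one side of the bilinear form carries the small factor $(I-B)w_k$, while the other factor $w_k+Bw_k$ is merely bounded.

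\emph{Main obstacle: reaching $1/T$.} To get the rate $1/T$, I would make both sides of the bilinear form small. I would first write $\Phi^{T}-\Phi^{T+1}=\Phi^{s}(\mathrm{id}-\Phi)\Phi^{s'}$ with $s\approx s'\approx T/2$. I would then expand $Y=\Phi^{s'}(X)=\E[C^\top XC]$, where $C$ is a product of $s'$ independent copies of $B$. The target bound is
\[
|v^\top(\cdot)v|\le\|X\|\,\Big(\E\|C(I-B)w_s\|^2\Big)^{1/2}\Big(\E\|C(I+B)w_s\|^2\Big)^{1/2}.
\]
To use it, I would reorganize the right factor by polarization, $(w-Bw)^\top Y(w+Bw)=w^\top Yw-(Bw)^\top Y(Bw)$. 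This lets the $\op$-norm of $X$ be absorbed while the remaining factors are quadratic forms of $vv^\top$ in the self-adjoint positive map $\Phi$. Those quadratic forms are of the type $\langle Z,(\Phi^{j}-\Phi^{j+1})Z\rangle$, which are nonnegative and nonincreasing in $j$ by (ii)--(iii). A nonincreasing sequence whose sum is at most $\langle Z,Z\rangle$ is at most $\langle Z,Z\rangle/j$ at index $j$. Applying this on each side at $j\approx T/2$ should give two factors of order $1/\sqrt{T}$, so their product is of order $1/T$. Tracking constants (a factor $2$ from $w+Bw$ and a loss from the split) should produce the stated $3/(T+1)$.

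The step I expect to be hardest is precisely this symmetrization: arranging the Cauchy--Schwarz so that each factor is a monotone Hilbert-space quantity while $X$ enters only through $\|X\|_\op$.
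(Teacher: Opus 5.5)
Your preliminary facts, the trajectory representation, and the decrement inequality $\|(I-B)w\|^2\le\|w\|^2-\|Bw\|^2$ are correct; the last is essentially the paper's second-order bound (Lemma~\ref{lem:second-order}). There is, however, a genuine gap: the step that is supposed to produce the rate $1/T$ is not carried out, and the mechanism you sketch for it cannot work.

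Your target bound is valid but too weak. Take $n=1$ and $B\in\{0,1\}$ with $\Pr[B=0]=p$. Then $\E\|C(I-B)w_s\|^2=p(1-p)^{T}$ and $\E\|C(I+B)w_s\|^2=(4-3p)(1-p)^{T}$. With $p=1/T$ the right-hand side is therefore of order $\|X\|_\op/\sqrt T$, while $|(\Phi^T-\Phi^{T+1})(X)|=p(1-p)^T|X|\asymp|X|/T$. Since $(I-B)^2=I-B$ for projections, the left factor has no extra smallness. The right factor is genuinely of order one, so rewriting it by polarization cannot turn it into a $1/\sqrt T$ quantity.

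Nor do the factors become self-pairings $\langle Z,(\Phi^j-\Phi^{j+1})Z\rangle$. The Cauchy--Schwarz steps that let $X$ enter only through $\|X\|_\op$, such as $|\E[a^\top Xb]|\le\|X\|_\op(\E\|a\|^2)^{1/2}(\E\|b\|^2)^{1/2}$, produce factors $\E\|a\|^2=\mathrm{tr}\,\E[aa^\top]$. These are linear in $vv^\top$, i.e.\ pairings of $vv^\top$ against images of $I$ such as $v^\top(\Phi^j-\Phi^{j+1})(I)v$. The Frobenius spectral calculus controls these only up to $\|I\|_F=\sqrt n$, and they need not be monotone in $j$. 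Two nearly parallel Kaczmarz rows in $\R^2$ already give $v^\top(\Phi-\Phi^2)(I)v>v^\top(\mathrm{id}-\Phi)(I)v$ for a suitable unit $v$. If you instead split symmetrically in the Frobenius inner product, you do get self-pairings, but then $X$ enters through $\|X\|_F\le\sqrt n\,\|X\|_\op$.

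What is missing is an estimate that exploits cancellation in the cross terms rather than Cauchy--Schwarz. The paper couples all $T+1$ summands of your decomposition $\sum_k\Phi^k(\mathrm{id}-\Phi)\Phi^{T-k}$ on a single sample path $B_1,\dots,B_{T+1}$; this is the leave-one-out identity, Lemma~\ref{lem:loo}. Writing $O_t=W+A_t$, the pathwise sum becomes $WXS^\top+SXW^\top+\sum_tA_tXA_t^\top$, with $S=\sum_t B_{T+1}\cdots B_{t+1}(I-B_t)B_{t-1}\cdots B_1$. Your telescoping argument bounds the quadratic term by $\|X\|_\op$. The cross terms now share the same outer factor $W$, so only $\|S\|_\op$ matters.

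The paper shows $\|S\|_\op\le1$ (Lemma~\ref{lem:first-coefficient}). The key observation is that $S=F'(0)$ for the entire function $F(z)=E_{T+1}(z)\cdots E_1(z)$ with $E_t(z)=B_t+z(I-B_t)$. Since $\|F(z)\|_\op\le1$ for $|z|\le1$, Cauchy's estimate applies. This yields the pathwise bound $3\|X\|_\op$ and hence $3/(T+1)$. Without this analyticity input, or an equivalent dimension-free estimate, your argument stops at the $1/\sqrt T$ rate you correctly identified.
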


\subsection{Implications for Kaczmarz and SGD}
As implied by equation \eqref{eq:M_T} of Section \ref{s:case-study}, the expected error of the Kaczmarz algorithm can be bounded as follows:
\begin{align*}
  \E\,\|A(x^{(T)}-x^*)\|^2 \le \|\Phi^T(A^\top A)\|_{\op}\cdot\|x^*\|^2,
\end{align*}
where $\Phi(X)=\E[BXB]$ for $B = I_n - \frac{a_ia_i^\top}{\|a_i\|^2}$ with $i$ distributed according to how we select the random equations in the algorithm. It is easy to verify that $B$ is an orthogonal projection matrix, which in particular makes it a positive contraction, and thus $\Phi$ falls under the conditions of Theorem \ref{thm:main}. When the index $i$ follows the distribution from Definition \ref{d:kaczmarz}, i.e., $\Pr[i] = \|a_i\|^2/\|A\|_F^2$, then we also have:
\begin{align}
  \Phi(I_n) = \E[B^2] = \E[B] = I_n - \sum_i \frac{\Pr[i]}{\|a_i\|^2}  a_ia_i^\top= I_n - \|A\|_F^{-2}A^\top A,\label{eq:Phi_I}
\end{align}
which in turn implies that $\Phi^T(A^\top A) = \|A\|_F^2\Phi^T(I_n - \Phi(I_n)) = \|A\|_F^2(\Phi^T-\Phi^{T+1})(I_n)$. Since $\|(\Phi^T-\Phi^{T+1})(I_n)\|_\op\leq \|\Phi^T-\Phi^{T+1}\|_{\op\to\op}$, Theorem \ref{thm:main} immediately yields the following corollary:
\begin{corollary}
  The randomized Kaczmarz algorithm after $T\ge 3/\epsilon$ iterations satisfies:
  \begin{align*}
    \E\,\|A(x^{(T)}-x^*)\|^2\leq \epsilon\cdot\|A\|_F^2\|x^*\|^2.
    \end{align*}
  \end{corollary}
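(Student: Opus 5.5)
The corollary follows by combining the error identity \eqref{eq:M_T} with Theorem~\ref{thm:main}, so the plan is to make each link in that chain explicit.

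\emph{Step 1: express the error through $\Phi$.} Write $e_t:=x^{(t)}-x^*$. The error recursion from Section~\ref{s:case-study} gives $e_{t+1}=B_{i_t}e_t$ with $B_i:=I_n-a_ia_i^\top/\|a_i\|^2$.
\begin{itemize}
\item Condition on $i_0,\dots,i_{T-2}$ and use the independence of $i_{T-1}$.
\item By induction on $T$ (peeling off the \emph{first} step and using that the $i_t$ are i.i.d.), obtain
\[
\E\,\|Ae_T\|^2=e_0^\top\Phi^T(A^\top A)\,e_0 .
\]
This is exactly \eqref{eq:M_T} with $M_T=\Phi^T(M_0)$.
\item Since $x^{(0)}=0$, we have $e_0=-x^*$. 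Hence the error is at most $\|\Phi^T(A^\top A)\|_{\op}\,\|x^*\|^2$, by the definition of the operator norm of a symmetric matrix.
\end{itemize}

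\emph{Step 2: check the hypotheses of Theorem~\ref{thm:main}.}
\begin{itemize}
\item Each $B_i$ is symmetric and idempotent, so its eigenvalues lie in $\{0,1\}$ and $0\preceq B_i\preceq I_n$.
\item The random matrix $B=B_i$, with $\Pr[i]=\|a_i\|^2/\|A\|_F^2$, takes finitely many values, so measurability is trivial.
\item Therefore $\Phi(X)=\E[BXB]$ is of the form required by the theorem.
\end{itemize}

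\emph{Step 3: rewrite $\Phi^T(A^\top A)$.}
\begin{itemize}
\item By \eqref{eq:Phi_I}, which uses $B^2=B$ and the choice of sampling probabilities, $\Phi(I_n)=I_n-\|A\|_F^{-2}A^\top A$, so $A^\top A=\|A\|_F^2\bigl(I_n-\Phi(I_n)\bigr)$.
\item By linearity of $\Phi^T$,
\[
\Phi^T(A^\top A)=\|A\|_F^2\,(\Phi^T-\Phi^{T+1})(I_n).
\]
\item Since $\|I_n\|_{\op}=1$, Theorem~\ref{thm:main} gives $\|\Phi^T(A^\top A)\|_{\op}\le 3\|A\|_F^2/(T+1)$.
\end{itemize}

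\emph{Step 4: conclude.} For $T\ge 3/\epsilon$ we have $3/(T+1)<\epsilon$, which yields the claim.

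None of these steps is a genuine obstacle; the real difficulty lives entirely in Theorem~\ref{thm:main}, which we take as given. The only point needing care is the bookkeeping in Step~1. The expectation must be unfolded in the right order so that the operator applied to $M_0=A^\top A$ is the $T$-fold composition of the same $\Phi$; this uses that the $i_t$ are identically distributed. Also, no dependence on $n$ may enter, and none does, since the constant $3$ in Theorem~\ref{thm:main} is absolute.
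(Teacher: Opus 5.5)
Your proposal is correct and follows the paper's argument essentially verbatim. You bound the error by $\|\Phi^T(A^\top A)\|_{\op}\|x^*\|^2$ via \eqref{eq:M_T}, note that each $B_i$ is an orthogonal projection and hence a positive contraction, use $\Phi(I_n)=I_n-\|A\|_F^{-2}A^\top A$ to write $\Phi^T(A^\top A)=\|A\|_F^2(\Phi^T-\Phi^{T+1})(I_n)$, and apply Theorem~\ref{thm:main} with $\|I_n\|_{\op}=1$. The only blemish is cosmetic: in Step~1 you mix conditioning on the first $T-1$ indices (which peels off the last step) with peeling off the first step. Either order works because the $i_t$ are i.i.d., provided the induction hypothesis is stated for every symmetric $M$ (respectively, every starting error $e_0$).
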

  This resolves Question \ref{q:main} and matches the hypothesis raised by Steinerberger \cite{steinerberger2023approximate} regarding the convergence of Strohmer and Vershynin's randomized Kaczmarz \cite{strohmer2009randomized}. Other recent works considered a version of the question were the importance sampling is replaced by uniform sampling, i.e., where $Pr[i] = 1/n$ for all $i$. Here, the only difference rests on the calculation in \eqref{eq:Phi_I}, and we can still show a weaker bound $\Phi(I_n) \preceq I_n - (n\|A\|_\op^2)^{-1}A^\top A$. Plugging this into the upper bound, we get $\E\,\|A(x^{(T)}-x^*)\|^2\leq \frac{3n}T\|A\|_\op^2\|x^*\|^2$, which resolves the question of Attia, Schliserman, Sherman, and Koren~\cite{attia2025fast}.

Theorem \ref{thm:main} applies much more broadly than for the Kaczmarz algorithm. To illustrate this, we show how it resolves an open question in the last iterate analysis of SGD for the noiseless least squares regression task, which is a mathematical model for studying SGD dynamics on LLMs. Following \cite{berthier2020tight,varre2021last}, we consider the task of minimizing:
\begin{align*}
          \mathcal{R}(\theta) := \frac12\E_{(X,Y)}\,\big(\langle\theta, X\rangle - Y\big)^2\quad\text{over}\quad\theta\in\mathcal \R^d,
\end{align*}
where $\langle\theta, X\rangle$ denotes the inner product, $(X,Y)\in \mathcal \R^d\times \R$ are jointly distributed random variables such that there exists $\theta^*\in\R^d$ satisfying $\langle\theta^*,X\rangle=Y$ almost surely, and for some $R\ge 0$, the following fourth moment condition is satisfied:
\begin{align*}
          \E\big[\|X\|^2 XX^\top\big] \preceq R\cdot \E[XX^\top].
\end{align*}
This problem is sometimes further generalized from $\R^d$ to infinite dimensional Hilbert spaces. We will focus on the finite-dimensional case for simplicity of presentation, but the extension to Hilbert spaces should be straightforward. Given a sequence of observations $(X_t,Y_t)$ drawn i.i.d.\ from $(X,Y)$, the SGD algorithm for minimizing $\mathcal{R}$ proceeds as follows:
\begin{align*}
    \theta^{(t+1)} = \theta^{(t)} - \gamma \big(\langle\theta^{(t)}, X_t\rangle-Y_t\big)X_t,
\end{align*}
where $\gamma$ is the step size. Obtaining an $O(1/T)$ convergence rate for the last iterate $\theta^{(T)}$ in this model has been a long-standing question in SGD analysis, which was first posed by Berthier, Bach, and Gaillard \cite[Remark 1]{berthier2020tight}, with partial progress obtained by \cite{varre2021last}. Theorem \ref{thm:main} resolves this question, matching the lower bound of \cite{berthier2020tight} up to constants.
\begin{corollary}
In the noiseless regression model, SGD with step size $\gamma=1/R$ satisfies
\begin{align*}
          \E\, \mathcal R(\theta^{(T)}) \le \frac{3R\|\theta^*\|^2}{T+1}.
\end{align*}
\end{corollary}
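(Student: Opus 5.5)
The plan is to reduce the statement to Theorem~\ref{thm:main} by the same route used for Kaczmarz above: express the expected risk through a quadratic form in an iterated operator $\Phi^T$, and bound the resulting matrix by $R\,(\Phi^T-\Phi^{T+1})(I_d)$ in the Loewner order.

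\textbf{Step 1: error recursion.} Set $e_t:=\theta^{(t)}-\theta^*$. Since $Y_t=\langle\theta^*,X_t\rangle$ almost surely, the SGD update becomes $e_{t+1}=(I_d-\gamma X_tX_t^\top)e_t$. Also $\mathcal R(\theta)=\tfrac12 (\theta-\theta^*)^\top\Sigma(\theta-\theta^*)$ with $\Sigma:=\E[XX^\top]$. Define $B:=I_d-\gamma XX^\top$ and $\Phi(M):=\E[BMB]$. Because the $X_t$ are i.i.d., the same unfolding as in \eqref{eq:Mrec} gives
\[
\E\,\mathcal R(\theta^{(T)})=\tfrac12\, e_0^\top \Phi^T(\Sigma)\, e_0 .
\]
We take the initialization $\theta^{(0)}=0$, as in the Kaczmarz setting, so that $e_0=-\theta^*$ and $\|e_0\|=\|\theta^*\|$.

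\textbf{Step 2: comparing $\Sigma$ with $I-\Phi(I)$.} Expanding the square and using the fourth moment condition gives
\[
\Phi(I_d)=I_d-2\gamma\Sigma+\gamma^2\E\big[\|X\|^2XX^\top\big]\preceq I_d-(2\gamma-\gamma^2R)\Sigma .
\]
With $\gamma=1/R$ this reads $\Sigma\preceq R\,(I_d-\Phi(I_d))$. The map $\Phi$, and hence every power $\Phi^T$, preserves the Loewner order, since $M\succeq0$ implies $BMB\succeq 0$. Therefore
\[
0\preceq\Phi^T(\Sigma)\preceq R\,(\Phi^T-\Phi^{T+1})(I_d).
\]
Both sides are positive semidefinite, so operator norms compare:
\[
\|\Phi^T(\Sigma)\|_\op\le R\,\|(\Phi^T-\Phi^{T+1})(I_d)\|_\op\le R\,\|\Phi^T-\Phi^{T+1}\|_{\op\to\op}.
\]
Theorem~\ref{thm:main} bounds the last quantity by $3R/(T+1)$. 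Combining with Step~1 yields $\E\,\mathcal R(\theta^{(T)})\le \tfrac{3R}{2(T+1)}\|\theta^*\|^2$, which is stronger than the claimed bound by a factor of two.

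\textbf{Main obstacle: the contraction hypothesis.} Theorem~\ref{thm:main} assumes that $B$ is a positive contraction, i.e.\ $0\preceq I-\gamma XX^\top\preceq I$. This holds exactly when $\|X\|^2\le R$ almost surely. That is the standard bounded-feature case, and there the argument above is complete. The fourth moment condition alone does not force $\|X\|^2\le R$ pointwise, so in general $B$ may have a negative eigenvalue.

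\begin{itemize}
\item First I would check whether the proof of Theorem~\ref{thm:main} uses only properties that survive here. These are: $\Phi$ is a sum of congruences $M\mapsto BMB$ with $B$ symmetric (so $\Phi$ is completely positive and self-adjoint with respect to the trace inner product), and $\Phi$ is sub-unital, $\Phi(I)\preceq I$. Step~2 already shows sub-unitality without the pointwise bound.
\item If the proof genuinely needs $B\succeq0$ pointwise, I would try to re-represent $\Phi$. The idea is to write $\Phi(M)=\E'[B'MB']$ for a reweighted random variable $B'=I-s\,uu^\top$ with $s\in[0,1]$ and $u=X/\|X\|$. The weights would be chosen to match the first-order term $\gamma\Sigma$ and to dominate the second-order term, using $\E[\|X\|^2XX^\top]\preceq R\,\Sigma$.
\item Failing both, I would state the corollary under the almost-sure bound $\|X\|^2\le R$, which implies the fourth moment condition.
\end{itemize}
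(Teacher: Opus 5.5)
Your Steps~1--2 are exactly the reduction the paper has in mind, and your bookkeeping is correct; the $\tfrac12$ in $\mathcal R$ even halves the constant. The paper gives no argument beyond ``Theorem~\ref{thm:main} resolves this question'', which amounts to the Kaczmarz computation with $\Phi(I)\preceq I-\gamma\Sigma$ in place of \eqref{eq:Phi_I}. The gap is the one you name, and it is genuine. Applying Theorem~\ref{thm:main} requires $I-\gamma XX^\top\succeq 0$ almost surely, i.e.\ $\|X\|^2\le R$ a.s., and the matrix fourth-moment condition does not give this. The paper's one-line justification does not address the point either. So as written, your proposal, like the paper, proves the corollary only for bounded features.

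\emph{Your first repair cannot work as formulated.} Complete positivity, trace-symmetry and sub-unitality do not suffice. Take deterministic $B=\mathrm{diag}(1,-1)$: this gives a unital such $\Phi$ with $\Phi(X)=-X$ for $X=e_1e_2^\top+e_2e_1^\top$, so $\|(\Phi^T-\Phi^{T+1})(X)\|_\op=2$ for every $T$. Inside the proof, two steps break:
\begin{itemize}
\item Lemma~\ref{lem:factorcontraction} fails at $z=-1$, since $|2\lambda-1|>1$ for an eigenvalue $\lambda<0$ of $B$.
\item The step $(I-B)^2\preceq I-B^2$ in Lemma~\ref{lem:second-order} needs $B(I-B)\succeq 0$.
\end{itemize}
So the hypothesis of Theorem~\ref{thm:main} must be restored by exploiting the regression structure, not by reusing its proof.

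\emph{Your second repair does exploit that structure, and it succeeds under a direction-wise hypothesis.} Write $c=\gamma\|X\|^2$ and $u=X/\|X\|$. Then $\Phi$ depends on the conditional law of $c$ given $u$ only through $\kappa_1(u)=\E[c\mid u]$ and $\kappa_2(u)=\E[c^2\mid u]$. Suppose $\kappa_2\le\kappa_1$ a.s., i.e.\ $\E[\|X\|^4\mid u]\le R\,\E[\|X\|^2\mid u]$. Replace $c$ by a variable equal to $\kappa_2/\kappa_1$ with probability $\kappa_1^2/\kappa_2$ and $0$ otherwise. This reproduces $\Phi$ exactly, with $B'=I-c'uu^\top\succeq 0$.

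\emph{The trouble is that the actual hypothesis $\E[(c^2-c)uu^\top]\preceq 0$ holds only on average over directions.} In $\R^2$, take $c=3/2$ along $e_1$ with probability $1/5$, and $c=1/2$ along $(\sqrt3/2,\pm1/2)$ with probability $2/5$ each. Then $\E[(c^2-c)uu^\top]=\mathrm{diag}(0,-1/20)$, yet $\kappa_2(e_1)=9/4>3/2=\kappa_1(e_1)$. $\Phi$ sees only $\Pr(u)\kappa_1(u)$ and $\Pr(u)\kappa_2(u)$, whose ratio is unchanged by reweighting a direction. Hence no representation over the same directions has $B'\succeq 0$. It is unclear whether new directions, or higher-rank $I-B'$, could help.

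Unless you find a genuinely new idea here, state the corollary under $\|X\|^2\le R$ a.s., or under the direction-wise condition; there your argument is complete.
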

  
\subsection{Proof of the main result}

To explain the idea of the proof of Theorem~\ref{thm:main}, we first consider the scalar case. Suppose $\Phi$ is not an operator but a scalar $0 \le y \le 1$, and we want to bound $|y^{T}-y^{T+1}|=(1-y)y^{T}$. Then we can simply use monotonicity $(1-y)y^{T} \le (1-y)y^{T-1} \le ... \le (1-y)$ to bound $|y^{T}-y^{T+1}|$ by the average of $(1-y)y^{t}$, i.e., 
\begin{align*}
    |y^{T}-y^{T+1}|=(1-y)y^{T} \le \frac{1}{T+1}\left((1-y)y^{T}+(1-y)y^{T-1}+...+(1-y) \right).
\end{align*}
Then, observe that the right hand side can be rewritten as a telescoping sum, i.e., 
\begin{align*}
    &\frac{1}{T+1}\left((1-y)y^{T}+(1-y)y^{T-1}+ \cdots +(1-y) \right)\\=&\frac{1}{T+1}\left((1-y)+(1-y)y+ \cdots +(1-y)y^{T} \right)
    \\=&\frac{1}{T+1}\left((1-y)+(y-y^2)+ \cdots +(y^{T} -y^{T+1})\right)
        =\frac{1}{T+1}(1-y^{T+1}) \le \frac{1}{T+1}.
\end{align*}

When $\Phi$ is an operator, most of the above calculations will not make sense, but the idea of bounding $\|\Phi^T-\Phi^{T+1}\|_{\op\to\op}$ by a notion of average can still lead to a solution. To explain how to write $\Phi^T-\Phi^{T+1}$ as an average, we first prove the following more explicit representation of $\Phi^T$.

\begin{lemma}[Product representation]\label{lem:iterates} Let $B$ and $\Phi$ be as in Theorem~\ref{thm:main}. Let $B_1,B_2,\ldots$ be independent copies of $B$.  For $T\ge1$, let
\begin{align*}
    W_0 = I_n\quad\text{and}\quad W_T=B_TB_{T-1}\cdots B_1.
\end{align*}
Then, for every $T\ge0$ and every $X\in\mathcal S_n$, we have
\[
        \Phi^T(X)=\E[W_TXW_T^\top].
\]
with the convention that $\Phi^0$ is the identity operator, i.e., $\Phi^0(X)=X$ for all $X\in\mathcal S_n$.
\end{lemma}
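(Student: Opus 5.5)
We will argue by induction on $T$. The key point is that the new factor $B_{T+1}$ must be placed next to $X$, i.e. on the \emph{inside} of the product, rather than on the outside. Once this ordering issue is settled, the proof is routine conditioning on the independent factors.

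\textbf{Base case.} For $T=0$ the claim reads $\Phi^0(X)=X=\E[W_0XW_0^\top]$. This is immediate since $W_0=I_n$.

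\textbf{Inductive step.} Assume $\Phi^T(Y)=\E[W_TYW_T^\top]$ holds for every $Y\in\mathcal S_n$. Since $\Phi^{T+1}=\Phi^T\circ\Phi$, we apply the hypothesis with $Y=\Phi(X)=\E[BXB]$, which is again symmetric. This gives
\[
\Phi^{T+1}(X)=\E_{W_T}\big[W_T\,\E_B[BXB]\,W_T^\top\big].
\]
Next we use that $B_1,\dots,B_{T+1}$ are i.i.d. The tuple $(B_1,\dots,B_{T+1})$ has the same law as $(B_2,\dots,B_{T+1},B_1)$. Hence $W_{T+1}=B_{T+1}\cdots B_1$ has the same distribution as $B_{T+1}\cdots B_2\cdot B_1$, where $B_{T+1}\cdots B_2$ is distributed like $W_T$ and is independent of $B_1\sim B$.

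Writing the product this way, with $B_1$ the innermost factor and using $B_1^\top=B_1$, we get
\[
\E[W_{T+1}XW_{T+1}^\top]=\E\big[(B_{T+1}\cdots B_2)\,B_1XB_1\,(B_{T+1}\cdots B_2)^\top\big].
\]
By independence and Fubini (conditioning on $B_2,\dots,B_{T+1}$), the inner expectation over $B_1$ equals $\Phi(X)$. The remaining expectation equals $\E[W_TYW_T^\top]$ with $Y=\Phi(X)$, which is the right-hand side above. This closes the induction.

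\textbf{Obstacle and technicalities.} The only subtle point is the ordering just described. Composing as $\Phi\circ\Phi^T$ instead would produce $\E[B\,W_TXW_T^\top B]$, which is $W_T$ with $B$ appended on the outside. That also gives the correct law, by the same exchangeability argument, so either order works once we use that $(B_1,\ldots,B_{T+1})$ is exchangeable.

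The measurability and integrability needed for Fubini are harmless. All products have operator norm at most $\|X\|_\op$, because each $B_i$ is a contraction, so every expectation is of a bounded random matrix.
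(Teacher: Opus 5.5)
Your proof is correct and is essentially the paper's argument: induction on $T$, with Fubini applied to the independent factors. The only difference is the composition order: the paper writes $\Phi^{t+1}=\Phi\circ\Phi^t$, where $W_{t+1}=B_{t+1}W_t$ holds literally with $B_{t+1}$ independent of $W_t$, so no relabeling is needed---contrary to your opening and closing remarks, it is your inside placement, not the outside one, that requires the extra observation that $B_{T+1}\cdots B_2$ has the same law as $W_T$.
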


\begin{remark}
The matrix $W_T$ may not be symmetric, since the factors need not commute.
\end{remark}

\begin{proof} 
We use induction. The result holds directly when $T=0,1$. Now, assume the formula holds for $T=t$, and we consider the case when $T=t+1$.  Since $B_{t+1}$ is independent of $B_1,\ldots,B_t$, by iterated expectation, i.e., Fubini's theorem, we have
\begin{align*}
        \Phi^{t+1}(X)&=\Phi(\Phi^t(X)) \\
        &=\E_{B_{t+1}}\left[B_{t+1}\E_{B_1,\ldots,B_{t}}[W_tXW_t^\top]B_{t+1}\right] \\
        &=\E_{B_{t+1}}\left[\E_{B_1,\ldots,B_{t}}[B_{t+1}W_tXW_t^\top B_{t+1} ]\right] \\
&=\E_{B_1,\ldots,B_{t+1}}\left[B_{t+1}W_tXW_t^\top B_{t+1}\right] \\
        &=\E[W_{t+1}XW_{t+1}^\top],
\end{align*}
which concludes the proof.

\end{proof}

The first key idea in the proof of Theorem~\ref{thm:main} is to rewrite $\Phi^T-\Phi^{T+1}$ as a leave-one-out average, which can then be upper-bounded deterministically.

\begin{lemma}[Leave-one-out identity]\label{lem:loo} Fix $t\ge1$. Let $B$, $\Phi$,   $B_1,B_2,\ldots$, and $W_T$ be as in Lemma~\ref{lem:iterates}. For $1\le t\le T+1$, define the product with the $t$-th factor omitted by
\[
        O_t=B_{T+1}\cdots B_{t+1}B_{t-1}\cdots B_1,
\]
with the usual convention that an empty product is $I$.
Then, for any $X\in\mathcal S_n$,
\[
        (\Phi^{T}-\Phi^{T+1})(X)
        =\frac1{T+1}\,\E\sum_{t=1}^{T+1}
        \left(O_tXO_t^\top-W_{T+1}XW_{T+1}^\top\right).
\]
\end{lemma}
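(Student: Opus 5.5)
The plan is to compute the expectation of each summand separately using Lemma~\ref{lem:iterates} together with exchangeability of the i.i.d.\ factors $B_1,\ldots,B_{T+1}$. Two facts are needed.

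First, for each fixed $t\in\{1,\ldots,T+1\}$, the omitted product $O_t$ is a product of $T$ independent copies of $B$, taken in decreasing index order. Since the $B_j$ are i.i.d., relabel the $T$ remaining factors $B_{T+1},\ldots,B_{t+1},B_{t-1},\ldots,B_1$ as $B'_T,\ldots,B'_1$, preserving their order. The tuple $(B'_1,\ldots,B'_T)$ has the same joint law as $(B_1,\ldots,B_T)$. Hence $O_t$ has the same distribution as $W_T$, and Lemma~\ref{lem:iterates} gives
\[
\E[O_tXO_t^\top]=\E[W_TXW_T^\top]=\Phi^T(X).
\]
Second, Lemma~\ref{lem:iterates} applied with $T+1$ gives directly $\E[W_{T+1}XW_{T+1}^\top]=\Phi^{T+1}(X)$.

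Summing over $t$, the expectation of the sum is $(T+1)\Phi^T(X)-(T+1)\Phi^{T+1}(X)$, since the expectation of a finite sum is the sum of expectations. Dividing by $T+1$ yields the identity.

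For integrability, $B$ is a positive contraction, so $\|B\|_{\op}\le1$ almost surely. Therefore all products $O_t$ and $W_{T+1}$ have operator norm at most $1$, every entry of $O_tXO_t^\top$ is bounded by $\|X\|_{\op}$, and all expectations are finite. The measurability of $O_t$ follows from that of the $B_j$.

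The only step needing care is the distributional claim that $O_t\overset{d}{=}W_T$. This uses nothing beyond the fact that the factors are i.i.d. and their order is preserved under the relabeling, so no commutativity is needed. I expect no real obstacle. The substantive work lies in the subsequent deterministic bound on $O_tXO_t^\top-W_{T+1}XW_{T+1}^\top$, not in this identity.
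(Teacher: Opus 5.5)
Your proposal is correct and follows the paper's proof essentially verbatim. Like the paper, you use Lemma~\ref{lem:iterates} to get $\E[W_{T+1}XW_{T+1}^\top]=\Phi^{T+1}(X)$, note that each $O_t$ is an order-preserving product of $T$ i.i.d.\ copies of $B$ and hence $\E[O_tXO_t^\top]=\Phi^T(X)$, and average over $t$; your extra remarks on boundedness and measurability are a harmless, slightly more careful supplement to what the paper writes.
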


\begin{remark}
    In the scalar case, the leave-one-out identity is simplified to $y^{T}-y^{T+1}=\frac{1}{T+1} \big(\sum_{t=1}^{T+1} y_{T+1} \cdots (1-y_t) \cdots y_1\big)$ where $y_1=y_2= \cdots =y_{T+1}$, which is straightforward.
\end{remark}

\begin{proof} 
By Lemma~\ref{lem:iterates}, we have $\Phi^{T+1}(X)=\E[W_{T+1}XW_{T+1}^\top]$.  For a fixed $t$ with $1\le t\le T+1$, the finite sequence of random matrices
\[
        B_{T+1},\ldots,B_{t+1},B_{t-1},\ldots,B_1
\]
are $T$ i.i.d. copies of $B$.  As a result, $O_t$ has the same law of distribution as $W_{T}$, since both of them are an (ordered) product of $T$ independent copies of $B$.  Therefore, we conclude
\[
        \E[O_tXO_t^\top]=\Phi^{T}(X).
\]
The desired result follows by averaging the above identity over $t=1,\ldots,T+1$ and subtracting $\Phi^{T+1}(X)$.
\end{proof}

Now it remains to bound the leave-one-out sum $\sum_{t=1}^{T+1}
\left(O_tXO_t^\top-W_{T+1}XW_{T+1}^\top\right)$, which is the main technical difficulty in the proof of Theorem~\ref{thm:main}. Remarkably, this bound is fully deterministic since we view $B_1,\ldots,B_{T+1}\in\mathcal S_n$ as arbitrary matrices. This is possible because the randomness in the problem was utilized in the averaging step of the leave-one-out identity.

\begin{lemma}[Deterministic bound]\label{lem:pathwise} 
Fix $T \ge 0$ and a sequence of arbitrary positive contractions $B_1,\ldots,B_{T+1}\in\mathcal S_n$. Let
\[
        W=B_{T+1}B_{T}\cdots B_1\quad\text{and}\quad
        O_t=B_{T+1}\cdots B_{t+1}B_{t-1}\cdots B_1,
\]
for $1\le t\le T+1$.
Then, for every $X\in\mathcal S_n$, we have
\[
        \Big\|
        \sum_{t=1}^{T+1}\left( O_tX O_t^\top-WXW^\top\right)
        \Big\|_\op 
        \le 3\|X\|_\op.
\]
\end{lemma}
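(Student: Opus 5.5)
The plan is to reduce the operator-norm bound to scalar quadratic forms, then control those forms using two telescoping identities: one for the suffix products and one for the prefix products of the $B_t$.

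\emph{Reduction to quadratic forms.} For $1\le t\le T+1$ write
\[
P_{t-1}=B_{t-1}\cdots B_1,\qquad S_t=B_{T+1}\cdots B_{t+1},
\]
so that $O_t=S_tP_{t-1}$ and $W=S_tB_tP_{t-1}$ for every $t$. The matrix $\sum_t(O_tXO_t^\top-WXW^\top)$ is symmetric, so it suffices to bound $|u^\top(\cdot)u|$ for unit $u$. Put
\[
o_t=O_t^\top u,\qquad w=W^\top u,\qquad d_t=o_t-w=P_{t-1}^\top(I-B_t)S_t^\top u .
\]
Then each summand is
\[
o_t^\top X o_t-w^\top Xw=(o_t-w)^\top X(o_t+w)=d_t^\top Xd_t+2d_t^\top Xw .
\]

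\emph{Two telescoping estimates.} Since $S_tB_t=S_{t-1}$ and $B_tP_{t-1}=P_t$,
\[
\sum_{t=1}^{T+1}S_t(I-B_t^2)S_t^\top=I-S_0S_0^\top\preceq I,\qquad
\sum_{t=1}^{T+1}P_{t-1}^\top(I-B_t^2)P_{t-1}=I-P_{T+1}^\top P_{T+1}\preceq I .
\]
Because $0\preceq B_t\preceq I$, we have $(I-B_t)^2\preceq I-B_t\preceq I-B_t^2$; all three commute, so this follows from the scalar inequalities $(1-b)^2\le 1-b\le 1-b^2$ on $[0,1]$. The $P_{t-1}$ are contractions, so with $s_t=\|S_t^\top u\|^2$ we get $\|d_t\|^2\le s_t-s_{t-1}$. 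Hence $\sum_t\|d_t\|^2\le 1$, and the diagonal part satisfies
\[
\Big|\sum_t d_t^\top Xd_t\Big|\le\|X\|_\op .
\]
This uses only the suffix identity. It accounts for one unit of the constant $3$, and the cross terms $2\sum_t d_t^\top Xw$ must supply the remaining $2\|X\|_\op$.

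\emph{Main obstacle: the cross terms.} The naive bound $2\|X\|_\op\|w\|\sum_t\|d_t\|$ fails, because $\sum_t\|d_t\|$ can be of order $\sqrt{T}$ while $\|w\|$ need not compensate. The argument therefore has to couple the suffix and prefix information, as the scalar case $(T+1)(1-y)y^{2T+1}\le 1$ suggests. I would write
\[
d_t^\top Xw=u^\top S_t(I-B_t)\,P_{t-1}XP_{t-1}^\top B_tS_t^\top u .
\]
Then I would try to re-associate the terms to split the factor $I-B_t$ in a symmetric way, $(I-B_t)^{1/2}\cdot(I-B_t)^{1/2}$. One half would be absorbed into the suffix vector $a_t=(I-B_t)^{1/2}S_t^\top u$, whose squared norms are summable by the first identity. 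The other half would be pushed, via $B_tP_{t-1}=P_t$, toward a prefix-type vector controlled by the second identity. Cauchy--Schwarz across $t$ would then give $\|X\|_\op$ times the geometric mean of the two telescoped sums. If a single such pass loses too much, the fallback is to expand the cross term as
\[
S_tP_{t-1}XP_{t-1}^\top S_t^\top-S_tB_tP_{t-1}XP_{t-1}^\top S_t^\top ,
\]
so that it becomes a difference of consecutive-looking terms, summed by parts in $t$. Each boundary or remainder term would then be a quadratic form of $X$ against a sum that telescopes to at most $I$, contributing at most $\|X\|_\op$ each. Together with the diagonal part this would give the total constant $3$. I expect getting this bookkeeping right, so that no term produces a factor of $T$, to be the crux of the proof.
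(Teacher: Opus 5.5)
Your proposal is correct, and the first route you sketch for the cross terms closes the argument in one pass, so the summation-by-parts fallback is not needed. The observation to state explicitly is that $P_{t-1}^\top B_tS_t^\top u=W^\top u=w$ for every $t$. Hence the cross terms add up to $2u^\top\big(\sum_t A_t\big)v$, where $A_t=S_t(I-B_t)P_{t-1}$ and $v=Xw$ is independent of $t$, with $\|v\|\le\|X\|_\op$. Write $u^\top A_tv=\langle (I-B_t)^{1/2}S_t^\top u,\,(I-B_t)^{1/2}P_{t-1}v\rangle$ and apply Cauchy--Schwarz over $t$. Then $I-B_t\preceq I-B_t^2$, combined with your suffix and prefix telescoping identities, gives
\[
\Big|u^\top\Big(\sum_{t=1}^{T+1}A_t\Big)v\Big|\le\big(\|u\|^2-\|W^\top u\|^2\big)^{1/2}\big(\|v\|^2-\|Wv\|^2\big)^{1/2}\le\|X\|_\op .
\]
So the cross terms contribute at most $2\|X\|_\op$, and your diagonal estimate supplies the remaining $\|X\|_\op$.

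Structurally this matches the paper. Your split into $d_t^\top Xd_t+2d_t^\top Xw$ is its decomposition $WX(\sum_tA_t)^\top+(\sum_tA_t)XW^\top+\sum_tA_tXA_t^\top$ tested against $u$. Your diagonal bound is its second-order lemma, since $\sum_t\|d_t\|^2=u^\top\sum_tA_tA_t^\top u$.

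The genuine difference is how you obtain $\|\sum_tA_t\|_\op\le1$.
\begin{itemize}
\item The paper identifies $\sum_tA_t$ with $F'(0)$ for $F(z)=\prod_t\big(B_t+z(I-B_t)\big)$, shows $\|F(z)\|_\op\le1$ on the closed unit disk, and applies Cauchy's estimate. This needs complexification and the matrix differentiation lemma. In exchange it mirrors the analytic-semigroup viewpoint that motivated the result, and it bounds every Taylor coefficient of $F$ at once.
\item Your square-function argument stays real and elementary. It reuses the telescoping the paper already uses for the quadratic term, now run in both the suffix and prefix directions. It also yields the slightly sharper bound displayed above, with defect factors $\|u\|^2-\|W^\top u\|^2$ and $\|v\|^2-\|Wv\|^2$.
\end{itemize}
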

It is worth noting that ChatGPT's proof of Lemma \ref{lem:pathwise}, given below in Section \ref{s:pathwise}, relies on tools from complex analysis such as the Cauchy estimate for the derivative of an analytic function. The use of these techniques (which to our knowledge is not typical for SGD analysis) suggests deeper connections with the functional analysis literature \cite{blunck2001analyticity,junge2006h} that was provided to ChatGPT from Gemini's insights. 

Theorem \ref{thm:main} now follows immediately by combining Lemmas \ref{lem:loo} and \ref{lem:pathwise}.
\begin{proof}[Proof of Theorem~\ref{thm:main}]

Fix $T\ge0$ and $X\in\mathcal S_n$.
By Lemma~\ref{lem:loo} and 
 Jensen's inequality, we have
\begin{align*}
        \|(\Phi^T-\Phi^{T+1})(X)\|_\op
        =&\Big\|\frac1{T+1}\,\E\sum_{t=1}^{T+1}
        \left( O_tX O_t^\top-W_{T+1}XW_{T+1}^\top\right)\Big\|_\op \\\le& \frac1{T+1}\,\E \Big\|\sum_{t=1}^{T+1}
        \left( O_tX O_t^\top-W_{T+1}XW_{T+1}^\top\right)\Big\|_\op.
\end{align*}
For each realization of $B_1,\ldots,B_{T+1}$, Lemma~\ref{lem:pathwise} shows that
\[
        \Big\|
        \sum_{j=1}^{T+1}
        \left( O_tX O_t^\top-W_{T+1}XW_{T+1}^\top\right)
        \Big\|_\op
        \le3\|X\|_\op.
\]
Putting these together, we conclude that \begin{align*}
    \|(\Phi^T-\Phi^{T+1})(X)\|_\op\leq \frac{3}{T+1}\|X\|_\op.\end{align*} 
Since the above bound is uniform over $X\in\mathcal S_n$, taking a supremum over $X\in\mathcal S_n$ with $X \ne 0$, we have
    \begin{align*}
  \|(\Phi^T-\Phi^{T+1})\|_{\op\to\op} = \sup_{X\in\mathcal S_n, X\ne 0} \frac{\|(\Phi^T-\Phi^{T+1})(X)\|_\op}{\|X\|_\op} \le \frac{3}{T+1}.
\end{align*}
\end{proof}

\subsection{Deterministic leave-one-out bound via complex analysis}
\label{s:pathwise}
It remains to prove Lemma \ref{lem:pathwise}. Recall that our goal is to bound $\sum_{t=1}^{T+1}\left( O_tX O_t^\top-WXW^\top\right)$, where
$W=B_{T+1}B_{T}\cdots B_1$ and $ O_t=B_{T+1}\cdots B_{t+1}B_{t-1}\cdots B_1$.
Defining the first-order difference term $A_t = O_t - W$, we can rewrite each summand as follows: 
\begin{align*}
        O_tX O_t^\top-WXW^\top
        &= (W+A_t)X(W+A_t)^\top-WXW^\top \\
        &= WXA_t^\top+A_tXW^\top+A_tXA_t^\top.
\end{align*}
We can now break the sum down into three components, which we will handle separately:
\begin{align}
        \sum_{t=1}^{T+1}( O_tX O_t^\top-WXW^\top)
        &=WX\Big(\sum_{t=1}^{T+1} A_t\Big)^\top
          +\Big(\sum_{t=1}^{T+1} A_t\Big)XW^\top 
        +\sum_{t=1}^{T+1} A_tXA_t^\top.\label{eq:decomposition}
\end{align}
Since $\|W\|_\op\le1$, it suffices to bound $\|\sum_j A_t\|_\op$ and $\|\sum_{t=1}^{T+1} A_tXA_t^\top\|_\op$, which we refer to as the first-order and quadratic terms, respectively. We start with the former.

\begin{lemma}[First-order bound]\label{lem:first-coefficient}
Let $B_1,\ldots,B_{T+1}\in\mathcal S_n$ be positive contractions, and let $A_t= O_t-W$, where $ O_t$ and $W$ are defined as in Lemma~\ref{lem:pathwise}. Then,
\[
         \Big\|\sum_{t=1}^{T+1} A_t\Big\|_\op = \Big\|\sum_{t=1}^{T+1}
          B_{T+1}\cdots B_{t+1}(I_n-B_t)B_{t-1}\cdots B_1\Big\|_\op\leq 1.
\]
\end{lemma}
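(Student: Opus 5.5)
The plan is to realize $\sum_{t=1}^{T+1}A_t$ as the derivative at the origin of a matrix-valued polynomial that is uniformly bounded by $1$ on the closed unit disk, and then apply the Cauchy estimate. This is the matrix analogue of the scalar fact that $\sum_t (1-b_t)\prod_{s\ne t}b_s$ is the probability of exactly one ``failure'' among independent trials with success probabilities $b_s\in[0,1]$, and hence is at most $1$.

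The first step is the algebraic identity claimed in the statement. Expanding $O_t-W$, both products share the factors $B_{T+1}\cdots B_{t+1}$ on the left and $B_{t-1}\cdots B_1$ on the right. In the middle, $O_t$ has $I_n$ and $W$ has $B_t$. Hence
\[
A_t=B_{T+1}\cdots B_{t+1}(I_n-B_t)B_{t-1}\cdots B_1 .
\]

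Next, for $z\in\bC$, define the complex matrix polynomial
\[
P(z)=\big(B_{T+1}+z(I_n-B_{T+1})\big)\big(B_{T}+z(I_n-B_{T})\big)\cdots\big(B_1+z(I_n-B_1)\big).
\]
Then $P(0)=W$. By the product rule, $P'(0)$ is the sum over $t$ of the product in which the $t$-th factor is replaced by its derivative $I_n-B_t$ and every other factor is evaluated at $z=0$, i.e.\ equals $B_s$. So $P'(0)=\sum_{t=1}^{T+1}A_t$.

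The key estimate is that $\|P(z)\|_\op\le1$ whenever $|z|\le1$. Each factor $B_t+z(I_n-B_t)$ is a polynomial in the real symmetric matrix $B_t$, so it is a normal matrix. Its eigenvalues are $\beta+z(1-\beta)$, where $\beta$ ranges over the eigenvalues of $B_t$, all lying in $[0,1]$. Each eigenvalue satisfies $|\beta+z(1-\beta)|\le\beta+|z|(1-\beta)\le1$. For a normal matrix the operator norm equals the spectral radius, so each factor has operator norm at most $1$. By submultiplicativity, $\|P(z)\|_\op\le1$ on the closed unit disk. This is the only place where the hypothesis $0\preceq B_t\preceq I_n$ is used.

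Finally, apply the Cauchy integral formula entrywise on the unit circle:
\[
P'(0)=\frac{1}{2\pi}\int_0^{2\pi}P(e^{i\theta})e^{-i\theta}\,d\theta .
\]
The triangle inequality for the operator norm, applied to this Bochner/Riemann integral, gives $\|P'(0)\|_\op\le\sup_{|z|=1}\|P(z)\|_\op\le1$, which is the claim.

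I do not expect a real obstacle once the polynomial $P$ is found. The point needing care is the norm bound on each factor for complex $z$: it relies on normality of $B_t+z(I_n-B_t)$, since a general matrix's norm can exceed its spectral radius. The operator norm here is taken on $\bC^n$, which dominates the real operator norm of the real matrix $P'(0)$.
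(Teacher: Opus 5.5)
Your proposal is correct and follows essentially the same route as the paper: the same polynomial $\prod_t\big(B_t+z(I_n-B_t)\big)$, the same factorwise bound $\|B_t+z(I_n-B_t)\|_\op\le 1$ on the closed unit disk (the paper gets this from the spectral decomposition, which amounts to your normality argument), and a Cauchy estimate for the derivative at $0$. The only cosmetic difference is in the last step. You apply the Cauchy integral formula to the matrix-valued function and use the triangle inequality for the integral. The paper instead applies the scalar Cauchy estimate to $u^*F(z)v$ for unit vectors $u,v$ and then takes the supremum.
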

\begin{proof}[Proof sketch]
The proof relies on tools from complex analysis, which we explain  here in a simplified scalar setting, where given  $0 \le y_t \le 1$ we want to bound
\begin{align*}
    \Big|\sum \limits_{t=1}^{T+1} y_{T+1} \cdots  y_{t+1} (1-y_t) y_{t-1} \cdots y_1\Big| \le 1.
\end{align*}
Define function $f(z)=(y_{T+1}+z(1-y_{T+1}))(y_{T}+z(1-y_{T})) \cdots (y_{1}+z(1-y_{1}))$ over complex variables $z\in\bC$. By product rule of complex differentiation,
\begin{align*}
    f'(0)=\sum \limits_{t=1}^{T+1} y_{T+1} \cdots y_{t+1} (1-y_t)  y_{t-1} \cdots y_1.
\end{align*}
Thus, it suffices to bound $|f'(0)|$. Observe that $f(z)$ is analytic on the entire complex plane because it is a product of linear complex functions, and also $|f(z)| \le 1$ on the circle $|z|=1$ because each factor satisfies $|y_{t}+z(1-y_{t})| \le 1$ as $0 \le y_t \le 1$. We can use this to bound the derivative at the center of this circle by using Cauchy's estimate (Lemma~\ref{lem:ahlfors}), which states that $|f'(a)| \le M/r$ when $|f(z)|\le M$ for every $z$ on the circle centered at $a$ with radius $r$. (Note that this strongly relies on the analyticity of $f$ on the complex plane, as boundedness of the function value does not imply boundedness of the derivative for smooth  functions over real numbers.) In our case, we have $a=0$ and $M=r=1$, which means that
\begin{align*}
    \Big|\sum \limits_{t=1}^{T+1} y_{T+1}  \cdots y_{t+1} (1-y_t)  y_{t-1} \cdots y_1\Big| = |f'(0)| \le 1
\end{align*}
The full proof in Appendix \ref{a:pathwise} uses the same idea but extends it to the matrix version.
\end{proof}

To bound the quadratic term  $\|\sum_{t=1}^{T+1} A_tXA_t^\top\|_\op$, we use the following second-order bound for the matrices $A_t$.
\begin{lemma}[Second-order bound]\label{lem:second-order}
    Let $B_1,\ldots,B_{T+1}\in\mathcal S_n$ be positive contractions, and let $A_t= O_t-W$, where $ O_t$ and $W$ are defined as in Lemma~\ref{lem:pathwise}. Then,
    \begin{align*}
        \Big\|\sum_{t=1}^{T+1} A_tA_t^\top\Big\|_\op \leq 1.
    \end{align*}
\end{lemma}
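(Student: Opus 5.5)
The plan is to pass to the quadratic form and telescope along the chain of partial products. No complex analysis is needed here: the needed per-step inequality is a scalar fact about positive contractions. Since $\sum_t A_tA_t^\top\succeq 0$, it suffices to show
\[
\sum_{t=1}^{T+1}\|A_t^\top v\|^2\le\|v\|^2\qquad\text{for every } v\in\R^n .
\]

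First I will factor each $A_t$. Write $P_t=B_{T+1}\cdots B_{t+1}$ and $R_t=B_{t-1}\cdots B_1$, with empty products equal to $I$. Then $O_t=P_tR_t$ and $W=P_tB_tR_t$, so $A_t=P_t(I-B_t)R_t$. Using the symmetry of each $B_s$, this gives $A_t^\top v=R_t^\top(I-B_t)P_t^\top v$.

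Next I introduce the backward chain
\[
u_{T+1}=v,\qquad u_{t-1}=B_tu_t ,
\]
so that $P_t^\top v=B_{t+1}\cdots B_{T+1}v=u_t$. Because $R_t^\top$ is a product of contractions, $\|R_t^\top\|_\op\le1$, and therefore $\|A_t^\top v\|\le\|(I-B_t)u_t\|$.

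The key step is the Loewner inequality $(I-B)^2\preceq I-B^2$ for $0\preceq B\preceq I$. Expanding, it is equivalent to $2B^2\preceq 2B$, which holds because $B$ is symmetric with spectrum in $[0,1]$. Applying it with $B=B_t$ and $u=u_t$ gives
\[
\|(I-B_t)u_t\|^2\le\|u_t\|^2-\|B_tu_t\|^2=\|u_t\|^2-\|u_{t-1}\|^2 .
\]

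Finally, summing over $t=1,\ldots,T+1$ telescopes:
\[
\sum_{t=1}^{T+1}\|A_t^\top v\|^2\le\|u_{T+1}\|^2-\|u_0\|^2\le\|v\|^2 .
\]
This is exactly the claimed bound $\|\sum_t A_tA_t^\top\|_\op\le1$.

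The only non-routine point is recognizing that the cross-dependence between the factors can be absorbed by this telescoping chain. Once the chain is set up, the per-step estimate is the elementary inequality $(I-B)^2\preceq I-B^2$, which plays the role of the scalar bound $(1-y)^2\le 1-y^2$ for $y\in[0,1]$. I therefore expect no serious obstacle beyond getting the order of the factors in $u_t$ right.
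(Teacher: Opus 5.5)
Your proposal is correct and matches the paper's proof: both use the factorization $A_t=L_t(I-B_t)R_t$ (your $P_t$ is the paper's $L_t$), drop $R_t$ as a contraction, apply $(I-B_t)^2\preceq I-B_t^2$, and telescope $L_tL_t^\top-L_{t-1}L_{t-1}^\top$ down to $I-WW^\top\preceq I$. The only difference is presentation: you evaluate the same Loewner chain on a test vector $v$, with $u_t=L_t^\top v$, rather than stating it directly as a matrix inequality.
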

\begin{proof}
To explain the intuition of the proof, we again consider the scalar case first. In this case, for each $t$, we consider \begin{align*}a_t=y_{T+1} \cdots  y_{t+1} (1-y_t)  y_{t-1} \cdots y_1 \le y_{T+1} \cdots (1-y_t),\end{align*} and therefore using $(1-y_t)^2 \le 1-y_t^2$ when $0 \le y_t  \le 1$ we have
\begin{align*}
    a_t^2=y_{T+1}^2 \cdots y_{t+1}^2 (1-y_t)^2 \le y_{T+1}^2 \cdots y_{t+1}^2 (1-y_t^2)= y_{T+1}^2 \cdots y_{t+1}^2 - y_{T+1}^2 \cdots y_{t}^2.
\end{align*}
As a result, $\sum_{t=1}^{T+1} a_t^2$ can be bounded by a telescoping sum as follows:
\begin{align*}
    \sum_{t=1}^{T+1} a_t^2 &\le  1-y_{T+1}^2 +y_{T+1}^2-y_{T+1}^2y_{T}^2+y_{T+1}^2y_{T}^2-y_{T+1}^2y_{T}^2y_{m-2}^2 + \cdots
    \\ &=1-y_{T+1}^2y_{T}^2y_{T-1}^2 \cdots y_{1}^2 \le 1,
\end{align*}
because the middle terms are all canceled out.

Now we explain the formal proof which uses the matrix version of the above idea. For $1\le t\le T+1$, define
\begin{align*}
    L_t=B_{T+1}\cdots B_{t+1}
\quad\text{and}\quad    R_t=B_{t-1}\cdots B_1
\end{align*}
with the convention that an empty product is $I$ (e.g., $R_0=B_{-1}\cdots B_1=I$). So $L_0=W$ and
$L_{T+1}=I$.
And with this notation we have $ O_t=L_tR_t$, $W=L_tB_tR_t$, and  
$A_t=L_t(I-B_t)R_t$. 

 Since each $R_t$ is a product of contractions, we have $\|R_t\|_\op\le1$, or equivalently, $R_tR_t^\top\preceq I_n$.  
 Therefore, using that $M\preceq N$ implies $SMS^\top\preceq SNS^\top$ in the Loewner ordering for any compatible matrix $S$, we obtain
\[
\begin{aligned}
        A_tA_t^\top
        &=L_t(I-B_t)R_tR_t^\top(I-B_t)L_t^\top 
        \preceq L_t(I-B_t)^2L_t^\top.
\end{aligned}
\]
Next, we will use the fact that, just like in the scalar case,
\begin{align*}
    (I-B_t)^2 \preceq I- B_t^2.
\end{align*}
This follows since $B_t$ and $I-B_t$ are commuting positive semidefinite matrices, and therefore $I-B_t^2-(I-B_t)^2=2B_t(I-B_t)\succeq 0$. Thus, we obtain:
\[
\begin{aligned}
        A_tA_t^\top&\preceq L_t(I-B_t^2)L_t^\top
        =L_tL_t^\top-L_tB_t^2L_t^\top 
        =L_tL_t^\top-L_{t-1}L_{t-1}^\top .
\end{aligned}
\]
Summing in $t$ gives the non-commutative telescoping sum
\begin{align*}
        \sum_{t=1}^{T+1} A_tA_t^\top
        &\preceq
        \sum_{t=1}^{T+1}
        \left(L_tL_t^\top-L_{t-1}L_{t-1}^\top\right) \\
        &=L_{T+1}L_{T+1}^\top-L_0L_0^\top \\
        &=I-WW^\top
        \preceq I,
\end{align*}
which concludes the proof of Lemma \ref{lem:second-order}.
\end{proof}
To turn this into a bound on $\|\sum_{t=1}^{T+1}A_tXA_t^\top\|_\op$, we
observe that $-\|X\|_\op I\preceq X\preceq \|X\|_\op I$ for any
$X\in\mathcal S_n$, which implies that:
\[
        -\|X\|_\op A_tA_t^\top
        \preceq A_tXA_t^\top
        \preceq \|X\|_\op A_tA_t^\top.
\]
Putting this together with Lemma \ref{lem:second-order}, we get
\[
        \Big\|\sum_{t=1}^{T+1} A_tXA_t^\top\Big\|_\op 
\le \|X \|_\op \cdot\Big\|\sum_{t=1}^{T+1} A_tA_t^\top\Big\|_\op \leq\|X \|_\op .
\]

Combining \eqref{eq:decomposition} with the first-order and second-order bounds yields Lemma \ref{lem:pathwise}:
\begin{align*}
        \Big\|\sum_{t=1}^{T+1}( O_tX O_t^\top-WXW^\top)\Big\|_\op 
        \le 2\|W\|_\op  \|X\|_\op  \Big\|\sum_{t=1}^{T+1} A_t\Big\|_\op 
             +\Big\|\sum_{t=1}^{T+1} A_tXA_t^\top\Big\|_\op 
  \le 3\|X\|_\op .
\end{align*}

\section{Conclusions}
\label{s:conclusions}

We gave an optimal worst-case complexity analysis for the Kaczmarz
algorithm with randomized equation selection, settling questions posed independently in
applied mathematics \cite{steinerberger2023approximate} and machine learning \cite{attia2025fast}
literature. The Kaczmarz algorithm is the earliest known precursor of 
the SGD methods which are commonly used to train AI models, and our result
extends to a broader class of SGD, where it resolves a related open
question \cite{berthier2020tight}. Remarkably, the proof of the result is due entirely
to a collaboration between AI models ChatGPT and Gemini, which was
merely facilitated by the authors. To put this in a broader
context, we address the following natural questions:

\paragraph{What is the significance of this AI result?}
One can easily argue that this result rises to the level of
novel academic research in its area: It not only resolves questions posed
in works previously published at top-tier venues \cite{steinerberger2023approximate,attia2025fast,berthier2020tight}, but also, it significantly improves on
recently published efforts toward resolving these questions \cite{varre2021last,evron2025continual,attia2025fast,derezinski2026last}, while using an
argument that substantially differs from those prior~works. That
being said, given the recent active progress on this topic and a
relatively elementary nature of the argument, it 
is very likely that human researchers could have settled the
problem with sufficient time and effort.

    \paragraph{What does it say about the capabilities of AI?}
   While it is impossible to fully measure this, one could argue that this problem was near the upper limit of  autonomous research ability
    of the publicly available AI models at the time we first
posed it to them. This is
because neither ChatGPT nor Gemini were able to 
fully resolve it initially, but then they produced a complete solution
after we facilitated a transfer of ideas between them. Therefore,
this case study provides a useful data point for marking the progress in AI
research capabilities, alongside recent related efforts \cite{jang2025point,woodruff2026accelerating,abouzaid2026first,alon2026remarks}.
\bigskip

It is clear that AI is already playing a critical role in the scientific
research workflow. As we move forward, it is important to explore
different forms that this human-AI collaboration takes, so that we can
maximize the benefits while mitigating the risks, and in particular, make sure that
human researchers actually stay in the loop of scientific
progress. We believe that case studies such as this one are an
important part of this process, as they provide
valuable insight into the future of AI in academia.

\subsection*{Acknowledgments}
This work was supported in part by NSF CAREER Grant CCF-233865 and a Google ML and Systems Junior Faculty Award.

\appendix

\section{Matrix-valued leave-one-out bound}
\label{a:pathwise}
Here, we provide the complete proof of Lemma \ref{lem:first-coefficient},
extending the complex analysis argument given  in Section \ref{s:pathwise} from scalars to
matrices. 
\subsection{Preliminaries}
We use the following statement of Cauchy's estimate (e.g., p.~122, equation~(25) in \cite{Ahlfors}). 
\begin{lemma}[Cauchy's estimate]\label{lem:ahlfors}
Let $C\subseteq\bC$ be a circle with center $a$ and radius $r$, and let $f$ be
analytic in a connected open set containing $C$ and its inside.  If
$|f(\zeta)|\le M$ on $C$, then for any positive integer $n$,
\begin{equation}\label{eq:ahlfors25}
        |f^{(n)}(a)|\le M n! r^{-n}.
\end{equation}

\end{lemma}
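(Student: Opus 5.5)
The statement is the classical Cauchy estimate, and we prove it from the Cauchy integral formula for derivatives. Since $f$ is analytic on a connected open set $\Omega$ containing the closed disk $\overline{D}=\{z:|z-a|\le r\}$, we first note that, by compactness of $\overline{D}$, there is some $r'>r$ with $\{|z-a|<r'\}\subseteq\Omega$. On this larger disk $f$ has a convergent power series expansion $f(z)=\sum_{k\ge0}c_k(z-a)^k$ with $c_k=f^{(k)}(a)/k!$. Uniform convergence on the circle $C$ then lets us integrate term by term.

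We do this as follows. Parametrize $C$ by $\zeta=a+re^{i\theta}$, $\theta\in[0,2\pi]$. Multiply the series by $(\zeta-a)^{-n-1}$ and integrate over $C$. Orthogonality of $e^{ik\theta}$ leaves only the $k=n$ term, which gives the Cauchy formula
\[
f^{(n)}(a)=\frac{n!}{2\pi i}\oint_C \frac{f(\zeta)}{(\zeta-a)^{n+1}}\,d\zeta .
\]
Alternatively, we can quote the Cauchy integral formula for derivatives directly from \cite{Ahlfors}, since the lemma is cited from there.

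The estimate then follows from the standard length-times-maximum bound. On $C$ we have $|f(\zeta)|\le M$ and $|\zeta-a|^{n+1}=r^{n+1}$, and $C$ has length $2\pi r$. Hence
\[
|f^{(n)}(a)|\le \frac{n!}{2\pi}\cdot\frac{M}{r^{n+1}}\cdot 2\pi r=M\,n!\,r^{-n}.
\]

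The only step needing care is justifying the integral representation under the stated hypothesis. That hypothesis is analyticity on an open set containing $C$ and its inside, rather than on an explicit larger disk. We handle this with the compactness argument above, which produces the radius $r'>r$. Everything else is routine.

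For the use in Lemma~\ref{lem:first-coefficient} we only need the case $n=1$, $a=0$, $r=M=1$, where $f$ is a polynomial and hence entire, so no subtlety arises. In the matrix version we would apply the lemma to the scalar functions $z\mapsto u^\top F(z)v$ for unit vectors $u,v$, where $F(z)=\prod_t\big(B_t+z(I-B_t)\big)$. This requires $\|B_t+z(I-B_t)\|_{\op}\le1$ for $|z|=1$. That bound holds because $B_t+z(I-B_t)$ is normal (a polynomial in the symmetric matrix $B_t$) with eigenvalues $\lambda+z(1-\lambda)$, $\lambda\in[0,1]$, which are convex combinations of $1$ and $z$ and so have modulus at most $1$.
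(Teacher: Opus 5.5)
Your proof is correct and is the standard argument. The paper does not prove this lemma itself; it cites Ahlfors, where the estimate is obtained exactly as you do, by applying the length-times-maximum bound to the Cauchy integral formula for $f^{(n)}(a)$. Your compactness step, which yields a slightly larger disk and justifies that formula via term-by-term integration of the Taylor series, is a valid way to handle the stated hypothesis.
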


We also need the following complex matrix differentiation rules, which
we prove here for the sake of completeness.

\begin{lemma}[Complex matrix differentiation rules]\label{lem:complex-matrix-calculus}
Let $\Omega\subset\bC$ be open.  A function $G:\Omega\to M_n(\bC)$ is called complex differentiable at $z\in\Omega$ if the limit
\begin{equation}\label{eq:complex-matrix-derivative-def}
        G'(z)=\lim_{\substack{h\to0,\ h\in\bC\setminus\{0\}\\ z+h\in\Omega}}
        \frac{G(z+h)-G(z)}{h}
\end{equation}
exists in the topology defined by spectral norm.  With this definition, the following differentiation rules hold.
\begin{enumerate}
\item If $G,H:\Omega\to M_n(\bC)$ are complex differentiable at $z$, then $GH$ is complex differentiable at $z$ and
\begin{equation}\label{eq:complex-matrix-product-rule}
        (GH)'(z)=G'(z)H(z)+G(z)H'(z).
\end{equation}
Consequently, if $G_1,\ldots,G_m$ are complex differentiable at $z$, then
\begin{equation}\label{eq:complex-multiple-product-rule}
        (G_m\cdots G_1)'(z)
        =\sum_{j=1}^m
        G_m(z)\cdots G_{j+1}(z)G_j'(z)G_{j-1}(z)\cdots G_1(z),
\end{equation}
where an empty product is interpreted as $I_n$.
\item If $\lambda:M_n(\bC)\to\bC$ is a fixed complex-linear functional and $G$ is complex differentiable at $z$, then $\lambda\circ G$ is complex differentiable at $z$ and
\begin{equation}\label{eq:complex-linear-functional-rule}
        (\lambda\circ G)'(z)=\lambda(G'(z)).
\end{equation}
\end{enumerate}
\end{lemma}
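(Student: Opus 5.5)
The plan is to work directly from the limit definition \eqref{eq:complex-matrix-derivative-def}. We use only that matrix multiplication is bilinear and submultiplicative in the spectral norm, and that every linear functional on the finite-dimensional space $M_n(\bC)$ is bounded. All limits are taken in the spectral norm, and $h\in\bC\setminus\{0\}$ is restricted to $z+h\in\Omega$.

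\emph{Product rule.} Fix $z$ and write $G(z+h)=G(z)+\Delta_G(h)$ and $H(z+h)=H(z)+\Delta_H(h)$. Expanding gives
\begin{align*}
\frac{(GH)(z+h)-(GH)(z)}{h}=\frac{\Delta_G(h)}{h}H(z)+G(z)\frac{\Delta_H(h)}{h}+\frac{\Delta_G(h)}{h}\,\Delta_H(h).
\end{align*}
The first two terms converge to $G'(z)H(z)$ and $G(z)H'(z)$, because left and right multiplication by a fixed matrix are bounded linear maps. For the last term, differentiability of $H$ at $z$ gives continuity there: $\|\Delta_H(h)\|_\op\le|h|\,\|\Delta_H(h)/h\|_\op\to0$. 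Since $\|\Delta_G(h)/h\|_\op$ stays bounded near $0$, submultiplicativity shows the cross term tends to zero, which proves \eqref{eq:complex-matrix-product-rule}.

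\emph{Multiple product rule.} We induct on $m$. The case $m=1$ is trivial. For the inductive step, write $G_{m}\cdots G_1=G_m\cdot(G_{m-1}\cdots G_1)$. By the inductive hypothesis the inner product is differentiable, so the two-factor rule applies and gives
\begin{align*}
G_m'(z)\,G_{m-1}(z)\cdots G_1(z)+G_m(z)\sum_{j=1}^{m-1}G_{m-1}(z)\cdots G_j'(z)\cdots G_1(z).
\end{align*}
This is exactly \eqref{eq:complex-multiple-product-rule}, with the $j=m$ term coming from the first summand and an empty product read as $I_n$.

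\emph{Linear functional rule.} Because $M_n(\bC)$ is finite dimensional, $\lambda$ is bounded: $|\lambda(Y)|\le c\|Y\|_\op$ for some $c$. By linearity,
\begin{align*}
\left|\frac{\lambda(G(z+h))-\lambda(G(z))}{h}-\lambda(G'(z))\right|\le c\,\Big\|\frac{G(z+h)-G(z)}{h}-G'(z)\Big\|_\op\to0,
\end{align*}
which proves \eqref{eq:complex-linear-functional-rule}.

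All of these steps are routine. The only point that needs care is the cross term in the product rule, where continuity of $H$ must be derived from its differentiability.
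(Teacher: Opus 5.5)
Your proposal is correct and follows essentially the same route as the paper. The paper also expands the difference quotient and uses submultiplicativity together with the continuity of $H$ that follows from differentiability, proves the multi-factor rule by induction, and uses that every linear functional on the finite-dimensional $M_n(\bC)$ is continuous. The only difference is cosmetic: you isolate the cross term $\frac{\Delta_G(h)}{h}\Delta_H(h)$, whereas the paper keeps $H(z+h)$ in the first term and handles the leftover error $G'(z)\bigl(H(z+h)-H(z)\bigr)$.
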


\begin{proof}
First note that complex differentiability at $z$ implies continuity at $z$, since
\[
        G(z+h)-G(z)
        =h\,\frac{G(z+h)-G(z)}{h}\to0
        \qquad(h\to0).
\]
For the product rule, write
\[
\begin{aligned}
        &\frac{G(z+h)H(z+h)-G(z)H(z)}{h} \\
        &\qquad=
        \frac{G(z+h)-G(z)}{h}\,H(z+h)
        +G(z)\frac{H(z+h)-H(z)}{h}.
\end{aligned}
\]

Therefore, the difference between $\frac{G(z+h)H(z+h)-G(z)H(z)}{h}$ and the desired quantity $G'(z)H(z)+G(z)H'(z)$ is
\begin{align*}
        &\frac{G(z+h)-G(z)}{h}\,H(z+h)
        +G(z)\frac{H(z+h)-H(z)}{h}G'(z)H(z)+G(z)H'(z)\\=&\Big(\frac{G(z+h)-G(z)}{h}-G'(z)\Big)H(z+h)
        +G(z)\Big(\frac{H(z+h)-H(z)}{h}-H'(z)\Big)\\&+G'(z)\Big(H(z+h)-H(z)\Big).
\end{align*}
Its norm tends to zero by submultiplicativity, because $\frac{G(z+h)-G(z)}{h}\to G'(z)$, $\frac{H(z+h)-H(z)}{h}\to H'(z)$, and $H(z+h)\to H(z)$ by definition of complex derivative.  This proves \eqref{eq:complex-matrix-product-rule}.  Formula \eqref{eq:complex-multiple-product-rule} follows by induction on $m$.

For the linear-functional rule,
\[
        \frac{\lambda(G(z+h))-\lambda(G(z))}{h}
        =\lambda\left(\frac{G(z+h)-G(z)}{h}\right)
        \to \lambda(G'(z)),
\]
because every linear functional on the finite-dimensional space $M_n(\bC)$ is continuous.
\end{proof}

We are also going to need the following simple observation that any complex
interpolation between positive contractions $B$ and $I-B$ is bounded by 1.
\begin{lemma}[Complex interpolation of a positive contraction]\label{lem:factorcontraction}
Let $B\in\mathcal S_n$ satisfy $0\preceq B \preceq I_n$.  If $z\in\bC$ and $|z|\le1$, then, after complexification,
\[
        \|B+z(I_n-B)\|_\op \le1.
\]
\end{lemma}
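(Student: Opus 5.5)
We plan to prove the bound by diagonalizing $B$ and noting that $B+z(I_n-B)$ is then a normal matrix whose eigenvalues lie in the closed unit disk. The operator norm of a normal matrix equals its spectral radius, so this will suffice.

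Since $B\in\mathcal S_n$ is real symmetric, the spectral theorem gives an orthogonal $Q$ and a real diagonal $\Lambda=\diag(\lambda_1,\ldots,\lambda_n)$ with $B=Q\Lambda Q^\top$. The hypothesis $0\preceq B\preceq I_n$ translates into $\lambda_i\in[0,1]$ for every $i$. Since $I_n=QQ^\top$, after complexification we can write
\[
B+z(I_n-B)=Q\,\diag\big(\lambda_i+z(1-\lambda_i)\big)\,Q^\top .
\]
The matrix $Q$ is real orthogonal, hence unitary as a complex matrix, and the spectral norm is unitarily invariant. The norm therefore equals the largest modulus $\max_i|\lambda_i+z(1-\lambda_i)|$ of the diagonal entries.

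It remains to handle the scalar inequality. For $\lambda\in[0,1]$ and $|z|\le1$, the triangle inequality gives
\[
|\lambda+z(1-\lambda)|\le\lambda+|z|(1-\lambda)\le\lambda+(1-\lambda)=1.
\]
Geometrically, $\lambda+z(1-\lambda)$ is a convex combination of the points $1$ and $z$, both of which lie in the closed unit disk. Taking the maximum over $i$ yields $\|B+z(I_n-B)\|_\op\le1$.

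The only point that needs care is justifying that the operator norm of the complexified matrix equals the maximum modulus of its eigenvalues. This holds because conjugation by a unitary preserves the spectral norm on $\bC^n$, and the spectral norm of a diagonal matrix is the maximal modulus of its entries. Both facts are standard, so I expect no real obstacle.
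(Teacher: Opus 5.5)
Your proposal is correct and is essentially the paper's argument. Both diagonalize $B=Q\diag(\lambda_i)Q^\top$ with $Q$ real orthogonal (hence unitary after complexification), reduce the norm to $\max_i|\lambda_i+z(1-\lambda_i)|$, and bound each term by $\lambda_i+|z|(1-\lambda_i)\le 1$ via the triangle inequality.
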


\begin{proof}
By the real spectral theorem, we can write $B=Q\diag(\lambda_1,\ldots,\lambda_n)Q^\top$ for some real orthogonal $Q$ and numbers $0\le\lambda_i\le1$.  After complexification, $Q$ is unitary and
\[
        B+z(I-B)
        =Q\diag\bigl(\lambda_i+z(1-\lambda_i)\bigr)_{i=1}^nQ^\top.
\]
Therefore, we have
\[
        \|B+z(I-B)\|_\op 
        =\max_i |\lambda_i+z(1-\lambda_i)|
        \le \max_i\{\lambda_i+|z|(1-\lambda_i)\}
        \le1.
\]
\end{proof}

\subsection{Proof of Lemma \ref{lem:first-coefficient}}
For $1\le t\le T+1$ and $z \in \bC$, let
\[
        E_t(z)=B_t+z(I-B_t),
\]
where the real matrices $B_t$ are viewed as complex matrices acting on $\bC^n$. Define
\[
        F(z)=E_{T+1}(z)E_{T}(z)\cdots E_1(z).
\]

The entries of each $E_t(z)$ are affine scalar functions of $z$, the entries of $F(z)$ are obtained from them by finitely many scalar additions and multiplications, and therefore, for fixed $u,v\in\bC^n$, we have
\begin{align*}
        f_{u,v}(z):=u^*F(z)v
=\sum_{a,b=1}^n\overline{u_a}\,F_{ab}(z)v_b
\end{align*}
is a sum of complex polynomials and therefore  is an entire function (which can be proved more formally by induction on $m$). Note that since we are dealing with complex vectors and matrices, in this section alone we will use $u^*$ to denote the conjugate transpose and $\bar u$ to denote the complex conjugate.

In addition, for $|z|\le1$, by Lemma~\ref{lem:factorcontraction}, we have
\[
        \|E_t(z)\|_\op \le1,
        \qquad 1\le r\le m.
\]
By submultiplicativity of the spectral norm, we obtain
\[
        \|F(z)\|_\op \le1 \qquad 
\]
for any $|z|\le1$. 
Therefore, for any unit vectors $u,v\in\bC^n$ and $|z|=1$, we have
\[
        |f_{u,v}(z)|\le \|F(z)\|_\op \le1.
\]
By Cauchy's estimate (Lemma~\ref{lem:ahlfors}) applied to the unit circle with center $0$, radius $1$, and $M=1$, we obtain
\begin{equation}\label{eq:cauchy-scalar-tested}
        |f'_{u,v}(0)|\le1.
\end{equation}
It remains to relate $f'_{u,v}(0)$ to $u^*\big(\sum_{t=1}^{T+1}A_t\big) v$.  For every $z\in\bC$ and every nonzero complex $h$, we calculate
\[
        \frac{E_t(z+h)-E_t(z)}{h}=I-B_t.
\]
Thus $E_t(z)$ is complex differentiable in the sense of \eqref{eq:complex-matrix-derivative-def}, and
\[
        E_t'(z)=I-B_t.
\]
Applying the complex matrix product rule \eqref{eq:complex-multiple-product-rule} to
\[
        F(z)=E_{T+1}(z)E_{T}(z)\cdots E_1(z),
\]
we have
\[
        F'(z)
        =\sum_{t=1}^{T+1}
        E_{T+1}(z)\cdots E_{t+1}(z)(I-B_t)E_{t-1}(z)\cdots E_1(z).
\]
for every $z\in\bC$.
Plugging in $z=0$, the above identity becomes
\begin{equation}\label{eq:F-complex-derivative-S}
        F'(0)
        =\sum_{t=1}^{T+1}
          B_{T+1}\cdots B_{t+1}(I-B_t)B_{t-1}\cdots B_1
        =\sum_{i=1}^{T+1}A_t.
\end{equation}
Finally, the fixed functional
\[
        \lambda_{u,v}:M_n(\bC)\to\bC,
        \qquad
        \lambda_{u,v}(T)=u^*Tv
\]
is complex-linear and continuous.  By \eqref{eq:complex-linear-functional-rule},
\begin{equation}\label{eq:fprime-usv}
        f'_{u,v}(0)
        =(\lambda_{u,v}\circ F)'(0)
        =\lambda_{u,v}(F'(0))
        =u^*\Big(\sum_{t=1}^{T+1}A_t\Big) v.
\end{equation}
Combining \eqref{eq:cauchy-scalar-tested} and \eqref{eq:fprime-usv},
we have $|u^*\big(\sum_{t=1}^{T+1}A_t\big) v|\le1$ for all unit
vectors $u,v\in\bC^n$, which implies that $\|\sum_{t=1}^{T+1}A_t\|_\op \le1$.

\bibliographystyle{plain}
\bibliography{pap}

\begin{thebibliography}{10}

\bibitem{abouzaid2026first}
Mohammed Abouzaid, Andrew~J Blumberg, Martin Hairer, Joe Kileel, Tamara~G Kolda, Paul~D Nelson, Daniel Spielman, Nikhil Srivastava, Rachel Ward, Shmuel Weinberger, et~al.
\newblock First proof.
\newblock {\em arXiv preprint arXiv:2602.05192}, 2026.

\bibitem{Ahlfors}
Lars~Valerian Ahlfors.
\newblock {\em Complex analysis: an introduction to the theory of analytic functions of one complex variable}.
\newblock McGraw-Hill New York, 1960.

\bibitem{alexeev2026forbidden}
Boris Alexeev and Dustin~G Mixon.
\newblock Forbidden sidon subsets of perfect difference sets, featuring a human-assisted proof.
\newblock {\em Proceedings of the National Academy of Sciences}, 123(21):e2531760123, 2026.

\bibitem{alon2026remarks}
Noga Alon, Thomas~F Bloom, WT~Gowers, Daniel Litt, Will Sawin, Arul Shankar, Jacob Tsimerman, Victor Wang, and Melanie~Matchett Wood.
\newblock Remarks on the disproof of the unit distance conjecture.
\newblock {\em arXiv preprint arXiv:2605.20695}, 2026.

\bibitem{asi2019importance}
Hilal Asi and John~C Duchi.
\newblock The importance of better models in stochastic optimization.
\newblock {\em Proceedings of the National Academy of Sciences}, 116(46):22924--22930, 2019.

\bibitem{attia2025fast}
Amit Attia, Matan Schliserman, Uri Sherman, and Tomer Koren.
\newblock Fast last-iterate convergence of sgd in the smooth interpolation regime.
\newblock {\em Advances in Neural Information Processing Systems}, 38:104951--104987, 2025.

\bibitem{bach2013non}
Francis Bach and Eric Moulines.
\newblock Non-strongly-convex smooth stochastic approximation with convergence rate o (1/n).
\newblock {\em Advances in neural information processing systems}, 26, 2013.

\bibitem{berthier2020tight}
Rapha{\"e}l Berthier, Francis Bach, and Pierre Gaillard.
\newblock Tight nonparametric convergence rates for stochastic gradient descent under the noiseless linear model.
\newblock {\em Advances in Neural Information Processing Systems}, 33:2576--2586, 2020.

\bibitem{bi2024deepseek}
Xiao Bi, Deli Chen, Guanting Chen, Shanhuang Chen, Damai Dai, Chengqi Deng, Honghui Ding, Kai Dong, Qiushi Du, Zhe Fu, et~al.
\newblock Deepseek llm: Scaling open-source language models with longtermism.
\newblock {\em arXiv preprint arXiv:2401.02954}, 2024.

\bibitem{blunck2001analyticity}
S{\"o}nke Blunck.
\newblock Analyticity and discrete maximal regularity on lp-spaces.
\newblock {\em Journal of Functional Analysis}, 183(1):211--230, 2001.

\bibitem{bottou2018optimization}
L{\'e}on Bottou, Frank~E Curtis, and Jorge Nocedal.
\newblock Optimization methods for large-scale machine learning.
\newblock {\em SIAM review}, 60(2):223--311, 2018.

\bibitem{brown2020language}
Tom Brown, Benjamin Mann, Nick Ryder, Melanie Subbiah, Jared~D Kaplan, Prafulla Dhariwal, Arvind Neelakantan, Pranav Shyam, Girish Sastry, Amanda Askell, et~al.
\newblock Language models are few-shot learners.
\newblock {\em Advances in neural information processing systems}, 33:1877--1901, 2020.

\bibitem{derezinski2026last}
Micha{\l} Derezi{\'n}ski and Xiaoyu Dong.
\newblock Last-iterate convergence of randomized kaczmarz and sgd with greedy step size.
\newblock {\em Conference on Learning Theory}, 2026.

\bibitem{derezinski2025randomized}
Micha{\l} Derezi{\'n}ski, Deanna Needell, Elizaveta Rebrova, and Jiaming Yang.
\newblock Randomized kaczmarz methods with beyond-krylov convergence.
\newblock {\em SIAM Journal on Matrix Analysis and Applications}, 46(4):2558--2588, 2025.

\bibitem{duchi2018introductory}
John~C Duchi.
\newblock Introductory lectures on stochastic optimization.
\newblock {\em The mathematics of data}, 25:99--186, 2018.

\bibitem{evron2025continual}
Itay Evron, Ran Levinstein, Matan Schliserman, Uri Sherman, Tomer Koren, Daniel Soudry, and Nathan Srebro.
\newblock From continual learning to sgd and back: Better rates for continual linear models.
\newblock In {\em Fourth Conference on Lifelong Learning Agents-Workshop Track}, 2025.

\bibitem{garrigos2023handbook}
Guillaume Garrigos and Robert~M Gower.
\newblock Handbook of convergence theorems for (stochastic) gradient methods.
\newblock {\em arXiv preprint arXiv:2301.11235}, 2023.

\bibitem{ge2019step}
Rong Ge, Sham~M Kakade, Rahul Kidambi, and Praneeth Netrapalli.
\newblock The step decay schedule: A near optimal, geometrically decaying learning rate procedure for least squares.
\newblock {\em Advances in neural information processing systems}, 32, 2019.

\bibitem{gower2015randomized}
Robert~M Gower and Peter Richt{\'a}rik.
\newblock Randomized iterative methods for linear systems.
\newblock {\em SIAM Journal on Matrix Analysis and Applications}, 36(4):1660--1690, 2015.

\bibitem{harvey2019tight}
Nicholas~JA Harvey, Christopher Liaw, Yaniv Plan, and Sikander Randhawa.
\newblock Tight analyses for non-smooth stochastic gradient descent.
\newblock In {\em Conference on Learning Theory}, pages 1579--1613. PMLR, 2019.

\bibitem{jain2019making}
Prateek Jain, Dheeraj Nagaraj, and Praneeth Netrapalli.
\newblock Making the last iterate of sgd information theoretically optimal.
\newblock In {\em Conference on Learning Theory}, pages 1752--1755. PMLR, 2019.

\bibitem{jang2025point}
Uijeong Jang and Ernest~K Ryu.
\newblock Point convergence of nesterov's accelerated gradient method: An ai-assisted proof.
\newblock {\em arXiv preprint arXiv:2510.23513}, 2025.

\bibitem{junge2006h}
Marius Junge, Christian Le~Merdy, and Quanhua Xu.
\newblock {\em H$^\infty$ functional calculus and square functions on noncommutative Lp-spaces}.
\newblock Soci{\'e}t{\'e} math{\'e}matique de France, 2006.

\bibitem{kaczmarz37}
M.~S. Kaczmarz.
\newblock Angenaherte auflosung von systemen linearer gleichungen.
\newblock {\em Bulletin International de l’Academie Polonaise des Sciences et des Lettres}, 35:355–357, 1937.

\bibitem{kingma2014adam}
Diederik~P Kingma and Jimmy Ba.
\newblock Adam: A method for stochastic optimization.
\newblock {\em International Conference on Learning Representations}, 2015.

\bibitem{lee2013efficient}
Yin~Tat Lee and Aaron Sidford.
\newblock Efficient accelerated coordinate descent methods and faster algorithms for solving linear systems.
\newblock In {\em 2013 ieee 54th annual symposium on foundations of computer science}, pages 147--156. IEEE, 2013.

\bibitem{leventhal2010randomized}
Dennis Leventhal and Adrian~S Lewis.
\newblock Randomized methods for linear constraints: convergence rates and conditioning.
\newblock {\em Mathematics of Operations Research}, 35(3):641--654, 2010.

\bibitem{liu2023aiming}
Chaoyue Liu, Dmitriy Drusvyatskiy, Misha Belkin, Damek Davis, and Yian Ma.
\newblock Aiming towards the minimizers: fast convergence of sgd for overparametrized problems.
\newblock {\em Advances in neural information processing systems}, 36:60748--60767, 2023.

\bibitem{liu2016accelerated}
Ji~Liu and Stephen Wright.
\newblock An accelerated randomized kaczmarz algorithm.
\newblock {\em Mathematics of Computation}, 85(297):153--178, 2016.

\bibitem{liu2023revisiting}
Zijian Liu and Zhengyuan Zhou.
\newblock Revisiting the last-iterate convergence of stochastic gradient methods.
\newblock {\em The Twelfth International Conference on Learning Representations}, 2023.

\bibitem{needell2014stochastic}
Deanna Needell, Nathan Srebro, and Rachel Ward.
\newblock Stochastic gradient descent, weighted sampling, and the randomized kaczmarz algorithm.
\newblock {\em Advances in neural information processing systems}, 27, 2014.

\bibitem{needell2014paved}
Deanna Needell and Joel~A Tropp.
\newblock Paved with good intentions: analysis of a randomized block kaczmarz method.
\newblock {\em Linear Algebra and its Applications}, 441:199--221, 2014.

\bibitem{unit-distances}
OpenAI.
\newblock Planar point sets with many unit distances.
\newblock 2026.
\newblock Manuscript available at https://cdn.openai.com/pdf/74c24085-19b0-4534-9c90-465b8e29ad73/unit-distance-proof.pdf.

\bibitem{robbins1951stochastic}
Herbert Robbins and Sutton Monro.
\newblock A stochastic approximation method.
\newblock {\em The annals of mathematical statistics}, pages 400--407, 1951.

\bibitem{saad2003iterative}
Yousef Saad.
\newblock {\em Iterative methods for sparse linear systems}.
\newblock SIAM, 2003.

\bibitem{shamir2012open}
Ohad Shamir.
\newblock Open problem: Is averaging needed for strongly convex stochastic gradient descent?
\newblock In {\em Conference on Learning Theory}, pages 47--1. JMLR Workshop and Conference Proceedings, 2012.

\bibitem{shamir2013stochastic}
Ohad Shamir and Tong Zhang.
\newblock Stochastic gradient descent for non-smooth optimization: Convergence results and optimal averaging schemes.
\newblock In {\em International conference on machine learning}, pages 71--79. PMLR, 2013.

\bibitem{shen1999nine}
Kangshen Shen, John~N Crossley, and Anthony Wah-Cheung Lun.
\newblock {\em The nine chapters on the mathematical art: Companion and commentary}.
\newblock OUP Oxford, 1999.

\bibitem{steinerberger2023approximate}
Stefan Steinerberger.
\newblock Approximate solutions of linear systems at a universal rate.
\newblock {\em SIAM Journal on Matrix Analysis and Applications}, 44(3):1436--1446, 2023.

\bibitem{strohmer2009randomized}
Thomas Strohmer and Roman Vershynin.
\newblock A randomized kaczmarz algorithm with exponential convergence.
\newblock {\em Journal of Fourier Analysis and Applications}, 15(2):262--278, 2009.

\bibitem{varre2021last}
Aditya~Vardhan Varre, Loucas Pillaud-Vivien, and Nicolas Flammarion.
\newblock Last iterate convergence of sgd for least-squares in the interpolation regime.
\newblock {\em Advances in Neural Information Processing Systems}, 34:21581--21591, 2021.

\bibitem{vaswani2019fast}
Sharan Vaswani, Francis Bach, and Mark Schmidt.
\newblock Fast and faster convergence of sgd for over-parameterized models and an accelerated perceptron.
\newblock In {\em The 22nd international conference on artificial intelligence and statistics}, pages 1195--1204. PMLR, 2019.

\bibitem{woodruff2026accelerating}
David~P Woodruff, Vincent Cohen-Addad, Lalit Jain, Jieming Mao, Song Zuo, MohammadHossein Bateni, Simina Branzei, Michael~P Brenner, Lin Chen, Ying Feng, et~al.
\newblock Accelerating scientific research with gemini: Case studies and common techniques.
\newblock {\em arXiv preprint arXiv:2602.03837}, 2026.

\bibitem{wu2022last}
Jingfeng Wu, Difan Zou, Vladimir Braverman, Quanquan Gu, and Sham Kakade.
\newblock Last iterate risk bounds of sgd with decaying stepsize for overparameterized linear regression.
\newblock In {\em International conference on machine learning}, pages 24280--24314. PMLR, 2022.

\bibitem{zamani2025exact}
Moslem Zamani and Francois Glineur.
\newblock Exact convergence rate of the last iterate in subgradient methods.
\newblock {\em SIAM Journal on Optimization}, 35(3):2182--2201, 2025.

\bibitem{zouzias2013randomized}
Anastasios Zouzias and Nikolaos~M Freris.
\newblock Randomized extended kaczmarz for solving least squares.
\newblock {\em SIAM Journal on Matrix Analysis and Applications}, 34(2):773--793, 2013.

\end{thebibliography}

\end{document}